\documentclass{article}

\usepackage{microtype}
\usepackage{graphicx}
\usepackage{subcaption}
\usepackage{booktabs} 
\usepackage{amsmath}
\usepackage{tabularx}
\usepackage{booktabs}
\usepackage{array}
\usepackage{colortbl}
\usepackage{xcolor}
\usepackage{pifont}
\usepackage{titletoc}
\usepackage[titles]{tocloft}
\usepackage{etoc}

\definecolor{hlblue}{RGB}{230,240,255}
\definecolor{hlred}{RGB}{255,235,235}

\newcommand{\texthlblue}[1]{\begingroup\setlength{\fboxsep}{1pt}\colorbox{hlblue}{\strut #1}\endgroup}
\newcommand{\texthlred}[1]{\begingroup\setlength{\fboxsep}{1pt}\colorbox{hlred}{\strut #1}\endgroup}

\newcommand{\mathhlblue}[1]{\begingroup\setlength{\fboxsep}{1pt}\colorbox{hlblue}{$\displaystyle #1$}\endgroup}

\usepackage{hyperref}

\usepackage[accepted]{icml2026}

\usepackage{amsmath}
\usepackage{amssymb}
\usepackage{mathtools}
\usepackage{amsthm}

\usepackage{amsmath}
\usepackage{psfrag,epsf,color}
\usepackage{enumerate}
\usepackage{natbib} 
\usepackage{algorithm}
\usepackage{algorithmic}

\usepackage{url} 
\usepackage{multirow}

\usepackage{array}
\usepackage{bm}
\usepackage{float}
\usepackage[font=small,labelfont=bf]{caption}
\usepackage{bbold}
\usepackage[capitalize,noabbrev]{cleveref}
\usepackage[table]{xcolor}
\usepackage{tabularx}
\usepackage{array}
\usepackage{listings}
\usepackage{caption}

\theoremstyle{plain}
\newtheorem{theorem}{Theorem}[section]
\newtheorem{proposition}[theorem]{Proposition}
\newtheorem{lemma}[theorem]{Lemma}

\theoremstyle{definition}

\newtheorem{assumption}{Assumption}
\theoremstyle{remark}

\usepackage[textsize=tiny]{todonotes}

\icmltitlerunning{Reasoning Is Not Free: Robust Adaptive Cost-Efficient Routing for LLM-as-a-Judge}

\begin{document}

\twocolumn[
  \icmltitle{Reasoning Is Not Free: Robust Adaptive Cost-Efficient Routing for LLM-as-a-Judge}

  \icmlsetsymbol{equal}{*}

  \begin{icmlauthorlist}
    \icmlauthor{Wenbo Zhang}{equal,uci}
    \icmlauthor{Lijinghua Zhang}{equal,uci}
    \icmlauthor{Liner Xiang}{equal,uci}
    \icmlauthor{Hengrui Cai}{uci}
  \end{icmlauthorlist}

  \icmlaffiliation{uci}{Department of Statistics, University of California, Irvine, USA}

  \icmlcorrespondingauthor{Hengrui Cai}{hengrc1@uci.edu}

  \icmlkeywords{Machine Learning, ICML}

  \vskip 0.3in
]

\printAffiliationsAndNotice{}  

\begin{abstract}
Reasoning-capable large language models (LLMs) have recently been adopted as automated judges, but their benefits and costs in LLM-as-a-Judge settings remain unclear. Through controlled comparisons between reasoning and non-reasoning judges, we show that explicit reasoning substantially improves judgment accuracy on tasks requiring structured verification (e.g., math and coding), while offering limited or even negative gains on simpler evaluations and incurring significantly \emph{higher computational cost}. These findings motivate that reasoning should be used selectively rather than universally, with awareness of possible \emph{distribution shift}. We propose a Robust Adaptive Cost-Efficient Routing (RACER), which dynamically selects between reasoning and non-reasoning judges under a fixed budget by formulating routing as a constrained distributionally robust optimization problem. RACER explicitly accounts for distribution shift via a KL-divergence uncertainty set, admits an efficient primal--dual algorithm, and enjoys theoretical guarantees including uniqueness of the optimal policy and linear convergence. Extensive experiments show that RACER achieves superior accuracy--cost trade-offs under distribution shift. 
\end{abstract}

\section{Introduction}
As large language models (LLMs) continue to advance, reliably evaluating the quality of their outputs has become increasingly important but challenging. Owing to the high cost and limited scalability of human evaluation, the LLM-as-a-Judge paradigm has emerged as a promising alternative \citep{zheng2023judging,alpaca_eval,liu2023g,kim2023prometheus,fu2024gptscore}, in which LLMs themselves are used as evaluators. In parallel, reasoning-capable LLMs \citep{openai_learning_to_reason_llms_2024,guo2025deepseek} have attracted significant attention. A growing body of work demonstrates that explicitly training models to generate long-form reasoning chains prior to producing final answers substantially improves performance across a wide range of problem-solving tasks, including mathematics, code generation, instruction following, and agentic decision-making \citep{yang2025qwen3,liu2025understanding,yu2025dapo,team2025kimi}. 
While these findings provide valuable insights into the role of reasoning in LLMs as problem solvers, they do not necessarily imply that such models are effective judges. In current frontier open-source models, reasoning capabilities are typically acquired via training in verifiable problem-solving tasks, whereas the judging ability is not explicitly optimized \citep{yang2025qwen3,team2025kimi,olmo2025olmo,bercovich2025llama}. This raises a crucial question:\\
{\centering \emph{Can the reasoning skills learned from problem-solving tasks be effectively transferred to judgment tasks?}}

\begin{figure}[t]
\centering
\begin{subfigure}[t]{0.49\linewidth}
\centering
\includegraphics[width=\linewidth]{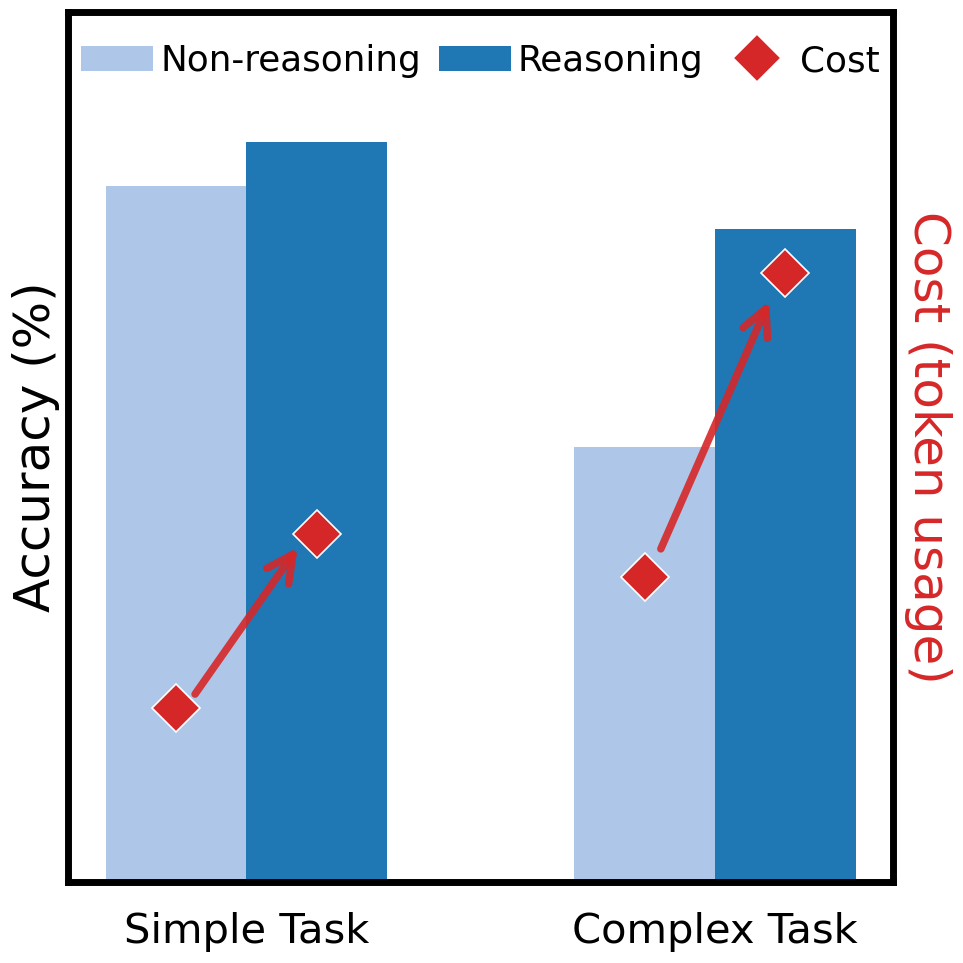}
\caption{Accuracy-cost trade-off}
\label{subfig:acc-cost}
\end{subfigure}
\hfill
\begin{subfigure}[t]{0.49\linewidth}
\centering
\includegraphics[width=\linewidth]{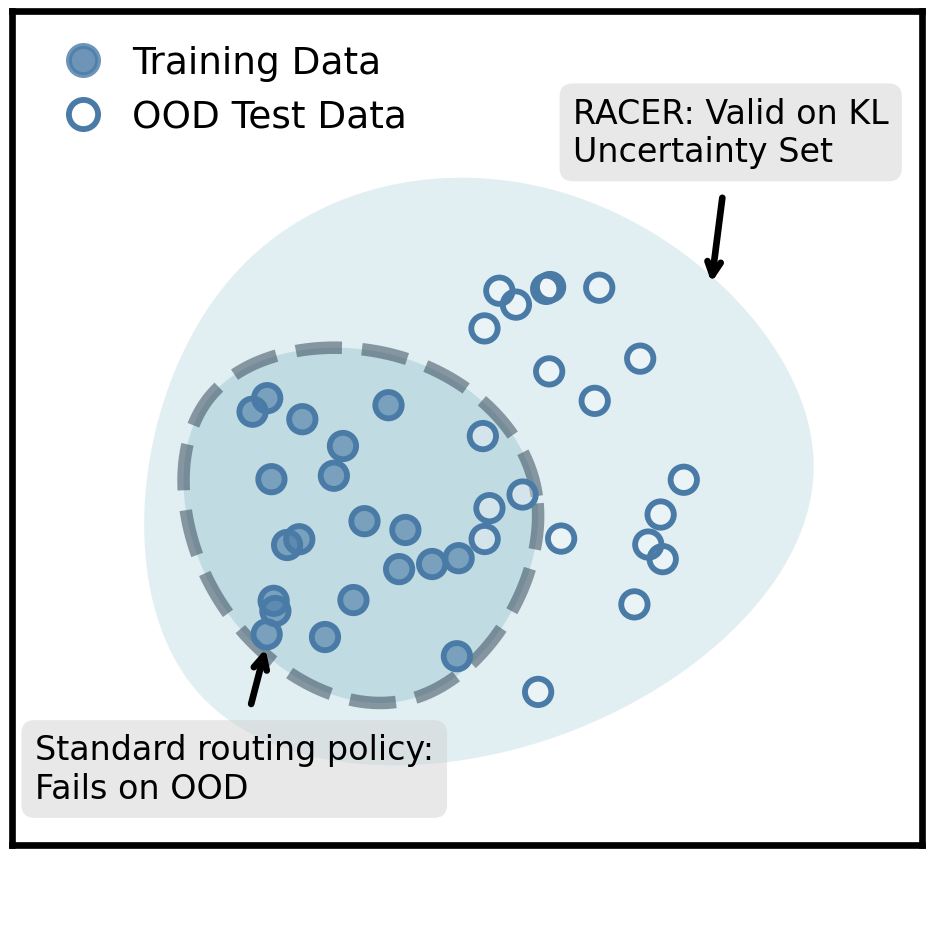}
\caption{Robust routing under OOD}
\label{subfig:robust-ood}
\end{subfigure}
\caption{
(a) Reasoning models outperform non-reasoning models on difficult tasks, achieving higher accuracy at the cost of increased computation, while offering only marginal gains on simple tasks.
(b) RACER remains robust to out-of-distribution (OOD) inputs by operating over a KL-divergence  uncertainty set, whereas standard routing policies fail under distribution shift.
}
\label{fig:motivation}
\end{figure}

Even if such transfer is possible, deploying reasoning-based judgment models introduces additional challenges. As shown in Fig.~\ref{subfig:acc-cost}, reasoning-based judgments incur substantially \emph{higher computational cost} due to multi-step deliberation and are \emph{not universally beneficial}, as they may induce overthinking on simple queries and lead to incorrect decisions \citep{chen2024not,sui2025stop}. A natural mitigation strategy is to learn a routing model that selectively invokes reasoning or non-reasoning models based on criteria such as expected performance or inference cost \citep{chen2023frugalgpt,frick2025prompt,liang2025thinkswitcher,zhang2025synapseroute}. However, existing routing approaches largely overlook the issues of \emph{distribution shift}. Routers are typically trained on static datasets, whereas real-world queries evolve due to user heterogeneity \citep{zhang2024personalization,chakraborty2024maxmin,son2024right}. Consequently, a router trained under a fixed data distribution may degrade significantly at deployment time, resulting in cost constraint violations or erroneous selections of the judging function.

To address these challenges, we propose the \textbf{Robust Adaptive Cost-Efficient Routing (RACER)}, which formulates router learning as a distributionally robust, constrained policy optimization problem, illustrated in Fig. \ref{subfig:robust-ood}.  The distributionally robust learning framework has recently been used to address the issue of
distribution shift in various settings \citep{namkoong2016stochastic,rahimian2019distributionally,duchi2021statistics}. This formulation explicitly accounts for distribution shift by optimizing over an uncertainty set of data distributions centered around training data. By solving the resulting robust objective, RACER seeks policies that remain reliable under distribution shift, rather than optimizing solely for average-case performance. 

Our \textbf{contributions} are summarized as follows:

$\bullet$ We present a comprehensive study comparing reasoning and non-reasoning modes of LLMs in LLM-as-a-Judge settings, characterizing when explicit reasoning improves judgment quality and when it fails, and providing practical insights into the effective use of reasoning for evaluation.\\
$\bullet$  We formulate router learning in a unified mathematical and algorithmic framework that jointly addresses the cost--performance trade-off and robustness to distribution shift via constrained distributionally robust optimization, followed by a tractable algorithm for efficient solution.\\
$\bullet$  We are the first in LLM routing to prove that the optimal router policy is unique and our policy iterates converge to it at a linear rate, i.e., the error contracts by a constant factor at each iteration~\citep{wei2021linear,ding2023last}. This provides strong theoretical support for the proposed method. \\
$\bullet$ Extensive experiments demonstrate that RACER consistently outperforms baseline methods under distribution shift, achieving improved accuracy--cost trade-offs.

\section{Does Reasoning Help LLM-as-a-Judge?}
\subsection{Accuracy--Cost Trade-offs of Reasoning Judges}\label{sec:tradeoff}

We study whether, and under what conditions, explicit reasoning improves LLM-as-a-Judge.
To this end, we perform controlled comparisons between reasoning and non-reasoning judges and characterize the resulting accuracy--cost trade-offs as well as agreement patterns across benchmarks and model sizes.

\noindent{\textbf{Experimental Setup.}}
To isolate the effect of explicit reasoning, we consider paired reasoning and non-reasoning judges
$\{\mathcal{M}_{\mathrm{R}}, \mathcal{M}_{\mathrm{no\text{-}R}}\}$ and perform controlled comparisons within each pair.
By matching the size, architecture, and overall capability of the model, performance differences can be primarily attributed to the presence or absence of reasoning.
Concretely, for hybrid models that support both reasoning and non-reasoning inference, we instantiate the pair by treating the reasoning mode as $\mathcal{M}_{\mathrm{R}}$ and the non-reasoning mode as $\mathcal{M}_{\mathrm{no\text{-}R}}$.
We evaluate multiple open-source hybrid models from the Qwen3 family (1.7B/4B/8B), as well as reasoning and non-reasoning judges on three widely used LLM-as-a-Judge benchmarks: \textsc{JudgeBench}~\cite{judgebench2024}, \textsc{RewardBench}~\cite{lambert2025rewardbench}, and \textsc{RewardBench~2}~\cite{malik2025rewardbench2advancingreward}, all of which consist of tasks from various domains. Each example provides a prompt, two candidate responses, and a preference label. 

\noindent{\textbf{Evaluation Metrics.}} For evaluation, we prompt the LLM judge to compare the two responses and to provide its preferred choice. We compute \emph{judge accuracy} as the fraction of examples for which the judge's preference matches the ground-truth label, and we record \emph{cost} as token consumption during inference. We define $\Delta$Accuracy as the difference in judgment accuracy between reasoning and non-reasoning modes,
$\Delta$Accuracy$=\mathrm{Acc}(\mathcal{M}_{\mathrm{R}})-\mathrm{Acc}(\mathcal{M}_{\mathrm{no\text{-}R}})$, and define the cost ratio as the ratio of token consumption under reasoning versus non-reasoning inference.

\begin{figure*}[t]
    \centering
    \begin{subfigure}{\textwidth}
        \centering
        \includegraphics[width=0.85\linewidth]{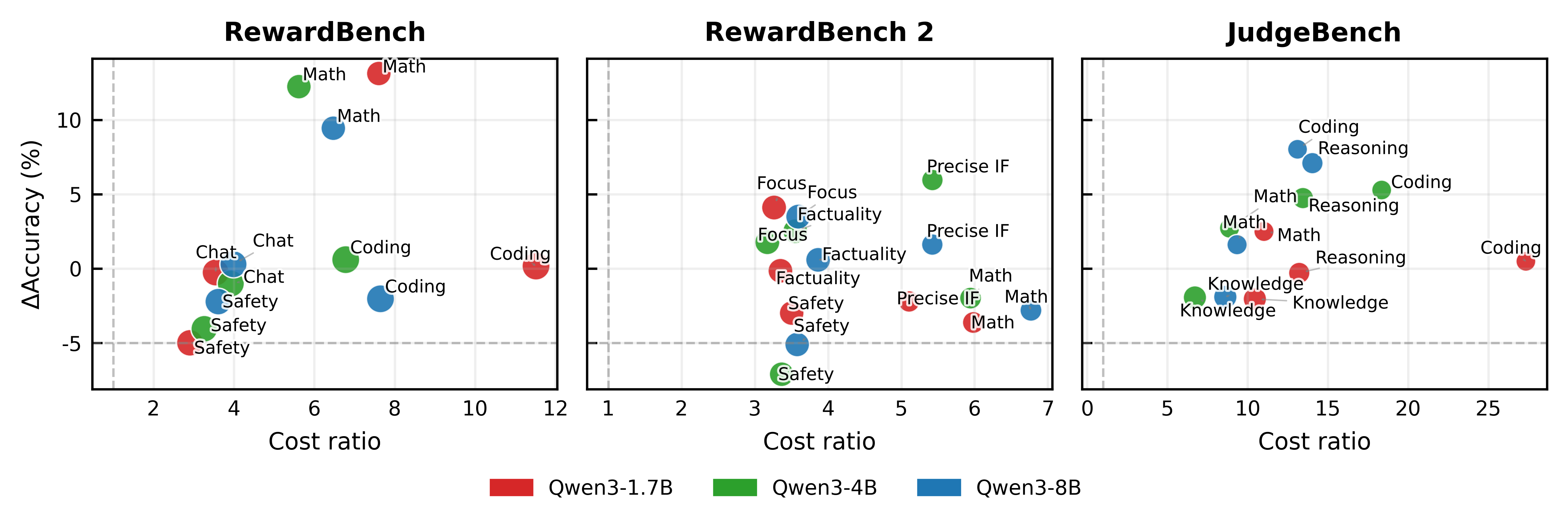}
    \end{subfigure}
    \begin{subfigure}{\textwidth}
        \centering
        \includegraphics[width=0.85\linewidth]{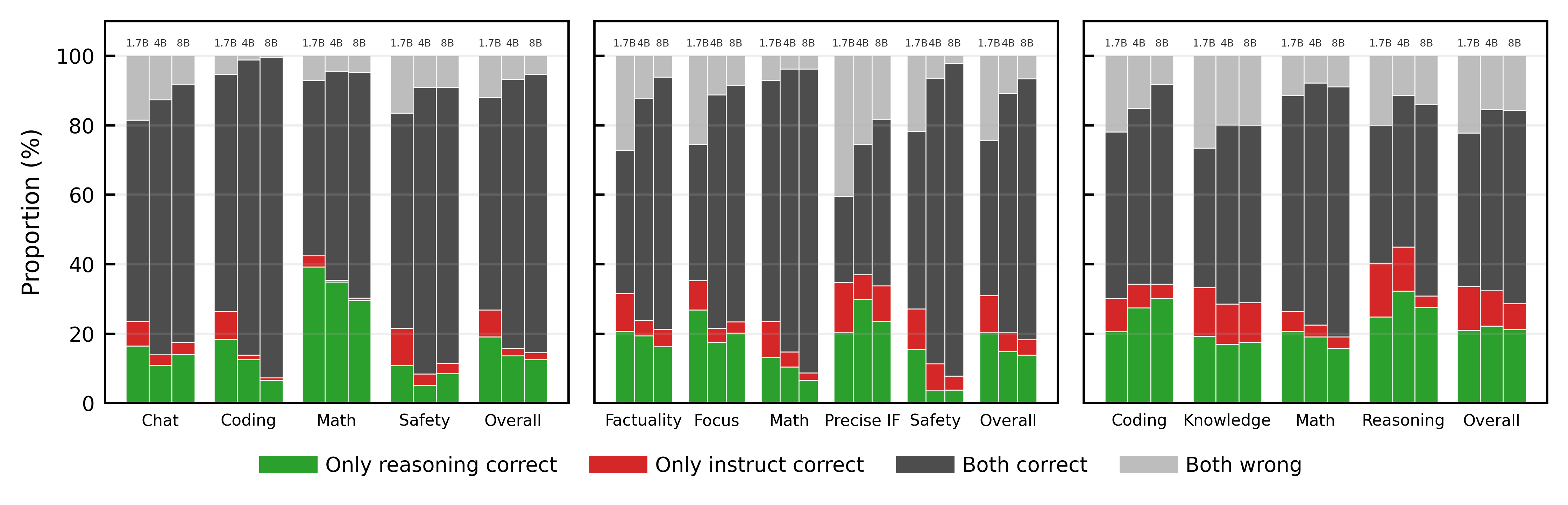}
    \end{subfigure}
    \caption{Accuracy--cost trade-offs and reasoning--instructional agreement across benchmarks. \textbf{Upper}: Accuracy improvement versus cost ratio. \textbf{Lower}: Agreement patterns between instruct and reasoning inference.}
    \label{fig:empirical}
\end{figure*}

\noindent{\textbf{Results.}}  Fig.~\ref{fig:empirical} (upper) plots $\Delta$Accuracy against the cost ratio, where points closer to the upper-left indicate larger gains at lower additional cost.
Across all three benchmarks, the strongest and most cost-effective gains concentrate in math and coding, whereas safety and knowledge show limited improvements and can even degrade under reasoning. 
Fig.~\ref{fig:empirical} (lower) summarizes agreement patterns between reasoning and non-reasoning judges.
When both modes reach the same outcome, non-reasoning is preferable due to its lower cost; when they disagree, selecting the better judge provides additional headroom.
This headroom shrinks with model size, as larger non-reasoning judges already achieve stronger baseline judgment accuracy.

Overall, reasoning judge yields significant performance gains, especially in reasoning-intensive domains, illustrating that skills learned from problem-solving tasks effectively transfer to judgment tasks. Also, we notice that the reasoning should be used selectively, given its substantial cost.

Moreover, the heterogeneous accuracy--cost trade-offs in Fig.~\ref{fig:empirical} suggest that both reward and compute cost can shift across domains and benchmarks. As a result, a router tuned on in-distribution data may mis-estimate either the benefit of reasoning or the risk of budget violation under distribution shift.
This motivates our distributionally robust objective in Eq.~(\ref{eq: dro 3}), which applies robustness separately to the reward and cost terms to hedge against these two failure modes.

\subsection{When and Why Reasoning Helps}

To better understand when and why reasoning improves LLM-as-a-Judge, we conduct a case analysis. 
This analysis reveals three recurring patterns: (i) reasoning improves judgment when evaluation requires explicit verification; (ii) reasoning is largely redundant when the correct choice is apparent from surface cues; and (iii) reasoning can hurt when it over-expands the evaluation scope and introduces irrelevant considerations.
Representative examples are summarized in Table~\ref{tab:emp_cases_merged} in the Appendix.

In math and coding tasks, where correct evaluation often requires checking intermediate steps, validating internal consistency, or reconciling multiple criteria, reasoning-mode judges are better aligned with these requirements.
In contrast, for tasks such as factual recall and concise question answering, reasoning and non-reasoning judges usually agree, leaving little room for improvement.
Finally, we observe failure cases where explicit reasoning introduces irrelevant factors, leading to degraded judgments.

Taken together, these case-level patterns help explain the heterogeneous accuracy--cost trade-offs observed in Section \ref{sec:tradeoff}.
They provide qualitative evidence that reasoning improves judgment primarily when its inductive bias aligns with the evaluation structure of the task, reinforcing the conclusion that reasoning should be activated selectively rather than unconditionally under computing constraints.

\section{RACER: Robust Adaptive Cost-Efficient Routing}

\subsection{Notation}
We denote the prompt space by $\mathcal{X}$, the response space by $\mathcal{Y}$. Let $\mathcal{Z} = \mathcal{X} \times \mathcal{Y} \times \mathcal{Y}$. We use $\rho \in \Delta(\mathcal{Z}\times\{0,1\})$ to denote the distributions over pairwise prompt–response tuples with preference labels where $\Delta(\cdot)$ is a probability simplex. For a given pairwise prompt–response tuple $(x, y_1, y_2)$, the label $l = 1$ indicates that $y_1$ is preferred to $y_2$, denoted as $y_1 \succ y_2$, while $l = 0$ indicates that $y_2 \succ y_1$. We define the judge function of the model $a$ as $\Phi_a : \mathcal{Z} \mapsto \{0,1\}$, which maps $z=(x,y_1,y_2)$ to a binary preference label, indicating whether $y_1 \succ y_2$ ($\Phi_a(z)=1$) or $y_2 \succ y_1$ ($\Phi_a(z)=0$). Here we assume deterministic judgments for simple cases, but the framework readily extends to stochastic settings. We denote the empirical distribution of the dataset with size $n$ as $\rho_n=\frac{1}{n}\sum_{i=1}^n \delta_{(z_i,l_i)}$, where $(z_i, l_i) \sim \rho$.

\subsection{Proposed Method: RACER}
Motivated by the accuracy–cost trade-offs of reasoning judges, we propose to learn a router that selectively activates reasoning modes. We formulate router learning as a constrained optimization problem. We define router policy ${\pi} : \mathcal{Z} \rightarrow \Delta({0,1})$, and denote the sampled decision from $\pi(\cdot \mid z)$ as $a$ taking a value of $1$ if choosing the reasoning mode and $0$ if choosing the non-reasoning mode. Our optimization objective in terms of the budget $C$ is given by:
\begin{align}
&\max _{\pi \in \Pi} \mathbb{E}_{(z,l)\sim{\rho_n}, a \sim \pi(\cdot \mid z)}[r(z, a, l)],  \nonumber \\
&   \text { s.t. }\mathbb{E}_{(z,l)\sim\rho_n, a \sim \pi(\cdot \mid x)}[c(z, a)] \leq C,
\label{method:acer}
\end{align}
where $r(z,a,l)=\mathbb{I}(\Phi_a(z)=l)$ with indicator function $\mathbb{I}(\cdot)$  and $c(z,a)$ denotes \emph{whether the judgment is correct} and the \emph{cost} of using the selected mode, respectively, and $\Pi$ is the policy class of interest. We call this method Adaptive Cost-Efficient Routing (ACER). The core idea here is that the routing policy should maximize judgment reward while controlling cost within a budget, yielding an adaptive accuracy–cost trade-off.

However, ACER is not explicitly designed to handle distribution shifts, which are common in real-world scenarios. To address this limitation, we further propose \textbf{Robust Adaptive Cost-Efficient Routing (RACER)}, which not only optimizes performance under cost constraints, but also remains robust to distribution shifts through distributionally robust learning. 

We reformulate the problem as follows: 
\begin{align}
   & \max _{\pi \in \Pi} \min _{\tilde{\rho} \in \mathcal{U}(\rho_n,\delta)} \mathbb{E}_{(z,l)\sim{\tilde{\rho}}, a \sim \pi(\cdot \mid z)}[r(z, a, l)], \nonumber \\
   &  \text { s.t. }\max_{\tilde{\rho}  \in \mathcal{U}(\rho_n,\delta)}\mathbb{E}_{(z,l) \sim \tilde{\rho} , a\sim \pi(\cdot \mid z)}\left[c\left(z, a\right)\right] \leq C. \label{method:dro 1}
\end{align}
This optimization is over a distributional uncertainty set within a distance of at most $\delta$ with $\rho_n$. Here, we utilize the uncertainty set $\mathcal{U}(\rho_n, \delta)=\left\{\tilde{\rho}: D_{\mathrm{KL}}(\tilde{\rho} \| \rho_n) \leq \delta\right\}$ and $D_{\mathrm{KL}}$ to denote the KL-divergence. Overall, RACER seeks to maximize the worst-case reward under a worst-case cost constraint, explicitly incorporating distributional robustness to prompt and response shifts, an aspect largely overlooked by existing routing methods. We now turn to solving \eqref{method:dro 1}.

We define the worst-case reward $R_{\mathcal{U}(\rho_n,\delta)}(\pi)$ and the worst-case cost
$C_{\mathcal{U}(\rho_n,\delta)}(\pi)$ as
\begin{align*}
R_{\mathcal{U}(\rho_n,\delta)}(\pi)
&:= \min_{\tilde{\rho} \in \mathcal{U}(\rho_n,\delta)}
\mathbb{E}_{(z,l)\sim \tilde{\rho},\, a\sim \pi(\cdot \mid z)}
\bigl[r(z,a,l)\bigr], \\
C_{\mathcal{U}(\rho_n,\delta)}(\pi)
&:= \max_{\tilde{\rho} \in \mathcal{U}(\rho_n,\delta)}
\mathbb{E}_{(z,l)\sim \tilde{\rho},\, a\sim \pi(\cdot \mid z)}
\bigl[c(z,a)\bigr].
\end{align*}
Hence, the original problem (\ref{method:dro 1}) can be rewritten as:
\begin{equation}
   \max _{\pi \in \Pi}  R_{\mathcal{U}(\rho_n,\delta)}\left(\pi\right) \quad\text { s.t. }C_{\mathcal{U}(\rho_n,\delta)}\left(\pi\right) \leq C .\label{method:dro 2}
\end{equation}
Then the Lagrangian min-max problem associated with (\ref{method:dro 2}) is given by:
{\small
\begin{align*}
\max_{\pi \in \Pi} \min_{\lambda\geq 0} L(\pi,\lambda):
= \max_{\pi \in \Pi} \min_{\lambda \geq 0}
R_{\mathcal{U}(\rho_n,\delta)}\left(\pi\right)
- \lambda C_{\mathcal{U}(\rho_n,\delta)}\left(\pi\right),
\end{align*}}

where $\lambda$ is the Lagrange multiplier. While this formulation is solvable through alternating gradient descent methods, it is computationally challenging since we do not have direct control over the data distribution $\tilde{\rho} \in \mathcal{U}(\rho_n,\delta)$ as they are not parameterized distributions. Moreover, the training data are sampled only from the source distribution $\rho$, without samples from other distributions in the uncertainty set $\mathcal{U}(\rho,\delta)$. To overcome this challenge, we introduce principled tractable algorithms to solve this problem.

The following proposition shows that the worst-case probability distribution within a KL uncertainty set can be efficiently approximated. Similar ideas have appeared in prior works on distributionally robust reinforcement learning \citep{gadot2024bring,xu2025robust}.

\begin{theorem}
\label{method:theorem}
Let $\rho_n \in \Delta_n$ be the empirical distribution over $\{z_i,l_i\}_{i=1}^n$, and define
$ f_i \coloneqq \mathbb{E}_{a \sim \pi(\cdot \mid z_i)}[f(z_i,a)]$.
Let
$$
\mathcal{U}(\rho_n,\delta)
\coloneqq
\Big\{
\tilde{\rho}\in\Delta_n :
\sum_{i=1}^n \tilde{\rho}(i)
\log\frac{\tilde{\rho}(i)}{\rho_n(i)} \le \delta
\Big\}.
$$
Define $\underline{\rho}$ and $\overline{\rho}$ as solutions to
$$
\underline{\rho}
\in
\arg\min_{\tilde{\rho}\in\mathcal{U}(\rho_n,\delta)}
\sum_{i=1}^n \tilde{\rho}(i) f_i,
\;
\overline{\rho}
\in
\arg\max_{\tilde{\rho}\in\mathcal{U}(\rho_n,\delta)}
\sum_{i=1}^n \tilde{\rho}(i)  f_i .
$$

Then there exist $\underline{s},\overline{s}\in\mathbb{R}$ and $\tau>0$ such that, for all $i$,
$$
\underline{\rho}(i) \propto \rho_n(i)\exp\!\left(\frac{\underline{s}- f_i}{\tau}\right),
\;
\overline{\rho}(i) \propto \rho_n(i)\exp\!\left(\frac{\ f_i-\overline{s}}{\tau}\right),
$$
with
$\underline{s}\le \sum_{i=1}^n \rho_n(i)\bar f_i$
and
$\overline{s}\ge \sum_{i=1}^n \rho_n(i)\bar f_i$.
\end{theorem}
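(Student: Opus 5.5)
We treat both problems as finite-dimensional convex programs over the simplex $\Delta_n$ and obtain the Gibbs form from the KKT conditions. The objective $\tilde\rho\mapsto\sum_i\tilde\rho(i)f_i$ is linear, and the feasible set $\mathcal{U}(\rho_n,\delta)$ is compact and convex, since $\tilde\rho\mapsto D_{\mathrm{KL}}(\tilde\rho\|\rho_n)$ is convex and lower semicontinuous on $\Delta_n$. Hence minimizers and maximizers exist. Slater's condition holds whenever $\delta>0$, because $\tilde\rho=\rho_n$ gives divergence $0<\delta$. Strong duality and the KKT conditions therefore apply. It suffices to treat the minimization; the maximization follows by replacing $f_i$ with $-f_i$, which turns $\underline{s}$ into $-\overline{s}$ and flips the inequality.

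\textbf{Step 1: the degenerate case.} If all $f_i$ with $\rho_n(i)>0$ are equal, every feasible $\tilde\rho$ is optimal. We may then take $\underline{\rho}=\rho_n$, which is of the stated form with $\underline{s}=\bar f:=\sum_i\rho_n(i)f_i$ and any $\tau>0$. We read $\bar f_i$ in the statement as $f_i$.

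\textbf{Step 2: activity and the Lagrangian.} Otherwise, the unconstrained minimizer over $\Delta_n$ puts all mass on $\arg\min_i f_i$, and that point has divergence $\log(1/\rho_n(\arg\min))$. If this is at most $\delta$, the statement degenerates: we would have to admit $\tau\to0$, and I would note this boundary case separately. In the main case the KL constraint is active at the optimum with multiplier $\beta>0$. Form the Lagrangian
\[
\sum_i\tilde\rho(i)f_i+\beta\Big(\sum_i\tilde\rho(i)\log\tfrac{\tilde\rho(i)}{\rho_n(i)}-\delta\Big)+\mu\Big(\sum_i\tilde\rho(i)-1\Big).
\]
Since $\rho_n(i)>0$ on the support, the derivative of the entropy term forces $\tilde\rho(i)>0$, so no nonnegativity multipliers are needed. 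Stationarity gives $f_i+\beta(\log\frac{\tilde\rho(i)}{\rho_n(i)}+1)+\mu=0$. Setting $\tau=\beta$ yields $\underline\rho(i)\propto\rho_n(i)\exp(-f_i/\tau)$. Any constant $\underline s$ can be absorbed into the normalization, so the $\propto$ form holds. This is the standard Donsker--Varadhan / Gibbs variational argument.

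\textbf{Step 3: the bound on $\underline s$.} This step carries the real content, and it requires fixing $\underline s$ canonically. Choose $\underline{s}$ so that the normalizer equals one, that is, $\underline s=-\tau\log\sum_i\rho_n(i)e^{-f_i/\tau}$; in other words $\underline{\rho}(i)=\rho_n(i)\exp((\underline s-f_i)/\tau)$ exactly. By Jensen's inequality, $\sum_i\rho_n(i)e^{-f_i/\tau}\ge e^{-\bar f/\tau}$, hence $\underline s\le\bar f$. The symmetric choice for the maximization, $\overline s=\tau\log\sum_i\rho_n(i)e^{f_i/\tau}\ge\bar f$, follows by the same Jensen step.

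If instead $\tau$ is to be common to both problems, as the statement suggests with a single $\tau$, a further step is needed. Write $\tau$ as a free temperature and choose $\underline s,\overline s$ so that each constraint holds with equality. This requires showing that, in the entropic-regularized (penalized) reformulation both problems share, one may use the same $\tau$. I expect this point, either reconciling a single $\tau$ with two different active multipliers or justifying the penalized surrogate, to be the main obstacle. I would resolve it by interpreting the theorem as an approximation via the penalized dual $\min_{\tilde\rho}\sum_i\tilde\rho(i)f_i+\tau D_{\mathrm{KL}}(\tilde\rho\|\rho_n)$, for which the Gibbs form with a common $\tau$ is exact.
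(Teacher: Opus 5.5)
This is essentially the paper's proof: KKT stationarity with the active KL multiplier as $\tau$, exact normalization (the paper's $\underline s=-(\mu+\lambda)$ is precisely your choice making the normalizer one), Jensen for $\underline s\le\sum_i\rho_n(i)f_i$, and the maximization by symmetry; your caveat that no Gibbs form with $\tau>0$ exists once $\delta$ is large enough for a distribution supported on $\arg\min_i f_i$ to be feasible is correct, and it repairs the paper's unqualified claim that the constraint is active with $\lambda>0$ whenever the $f_i$ are not all equal. On the single $\tau$, the paper also just treats $\overline\rho$ ``analogously'' with its own multiplier, and a common $\tau$ is false in general (with $n=3$, uniform $\rho_n$, $f=(0,0,1)$, $\delta\approx 0.044$, the minimizer $(2/5,2/5,1/5)$ forces $e^{1/\tau}=2$, while the max-tilt at that $\tau$, namely $(1/4,1/4,1/2)$, has KL $\approx 0.059>\delta$), so you should state the result with two temperatures $\underline\tau,\overline\tau$ rather than switch to the penalized surrogate, which changes what $\underline\rho,\overline\rho$ are.
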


We leave the proof of Theorem~\ref{method:theorem} to the Appendix \ref{subsec:close-form}. Based on this Theorem, we get a closed-form solution of $\underline{\rho},\overline{\rho}$ over KL uncertainty set $\mathcal{U}\left(\rho_n, \delta\right)$. 
The resulting distributions $\underline{\rho}$ and $\overline{\rho}$ are obtained by reweighting the data. In the minimization setting, where $f$ is a reward function, $\underline{\rho}$ downweights samples whose expected reward exceeds the baseline $\underline{s}$ and upweights those with lower-than-baseline rewards. In the maximization setting, where $f$ is a cost function, $\overline{\rho}$ upweights samples with higher-than-baseline cost, focusing optimization on high-risk regions. The parameter $\tau$ controls the strength of reweighting: lower $\tau$ induces a more aggressive concentration on extreme (worst-case) samples.

Since the baselines $\underline{s}$ and $\overline{s}$ are generally unknown, in practice we approximate them using the empirical mean, which serves as a valid surrogate by providing upper and lower bounds for $\underline{s}$ and $\overline{s}$, respectively. 
Thus, the Lagrangian min-max problem of (\ref{method:dro 2}) can be transformed as:
{\small
\begin{align}
\max_{\pi \in \Pi} \min_{\lambda\geq 0} L(\pi,\lambda)
&= \max_{\pi \in \Pi} \min_{\lambda \geq 0}
R_{\mathcal{U}(\rho_n,\delta)}\left(\pi\right)
- \lambda C_{\mathcal{U}(\rho_n,\delta)}\left(\pi\right)
\nonumber \\
&= \max_{\pi \in \Pi} \min_{\lambda \geq 0}
R_{\underline{\rho}}\left(\pi\right)
- \lambda C_{\overline{\rho}}\left(\pi\right).
\label{eq: dro 3}
\end{align}
}
where the last equality is implied by Theorem~\ref{method:theorem}. 

We introduce a regularized Lagrangian: 
\begin{equation}
L_\beta(\pi, \lambda):= L(\pi,\lambda)+\beta\left(\mathcal{H}(\pi)+\frac{1}{2}\lambda^2\right),
\label{method:reg-lagrangie}
\end{equation}
 where $\mathcal{H}(\pi)+\frac{1}{2} \lambda^2$ is added as a regularization to the original Lagrangian $L(\pi,\lambda)$. Here $\beta$ is a regularization parameter, and $\mathcal{H}(\pi):=\mathbb{E}_{(z,l)\sim\rho}[\mathcal{H}(\pi(\cdot\mid z)]$ is the entropy of the policy $\pi$. This step allows us to control the randomness of the routing policy, thereby encouraging exploration and facilitating convergence \citep{cen2022fast,ding2023last}.

We can then use the primal-dual method to solve (\ref{method:reg-lagrangie}):
\begin{align}
\pi_{t+1}
&= \underset{\pi \in \Pi}{\operatorname{argmax}}
\left\{R_{\underline{\rho}}(\pi)
- \lambda_t \, C_{\bar{\rho}}(\pi)
+ \beta \mathcal{H}(\pi)
\right\},  \label{equ:pi_t} \\
\lambda_{t+1}&= \underset{\lambda \geq0}{\operatorname{argmax}}\left\{- \lambda_t \, C_{\bar{\rho}}(\pi)+\frac{1}{2}\beta\lambda_t^2\right\}.
\label{equ:lambda_t}
\end{align}

Based on the derivations, we propose a practical \textbf{RACER} algorithm in Algorithm \ref{method:alg}.

\begin{algorithm}[t]
\caption{RACER}
\label{method:alg}
{\bfseries input:} Number of iterations $T$, temperature $\tau$, regularization parameter $\beta$, distribution $\rho$ for preference data generation, 
judging functions $(\Phi_0,\Phi_1)$ by an LLM.

\begin{algorithmic}[1]
\STATE Initialize $\pi_0$ and $\lambda_0$.
\FOR{Iteration $t = 0,1,\dotsc,T-1$}
    \STATE {\bf Construct } $\mathcal{D}_t = \{(z,l)\}$ where $(z,l) \sim \rho$.
    \STATE {\bf Sample routing actions:} 
    $a \sim \pi_t(\cdot \mid z),$  $\forall z \in \mathcal{D}_t$ (\emph{optional: enumerate all actions}).
    \STATE {\bf Calculate reward and cost:}\label{step:dno_rank} For each $(z,l) \in \mathcal{D}_t$ and routing choice $a$, get judge results $\Phi_a(z)$, and calculate the reward $r(z, a, l)$ and cost $c(z,a)$.
    \STATE {\bf Data reweighting:} Calculate batch mean reward $\overline{r}=\frac{\sum_{j=1}^{|\mathcal{D}_t|}r(z_j,a_j,l_j)}{{{|\mathcal{D}_t|}}}$ and cost $\overline{c}=\frac{\sum_{j=1}^{|\mathcal{D}_t|}c(z_j,a_j)}{{{|\mathcal{D}_t|}}}$, then approximate worse-case distributions: 
\begin{align*}
 &\underline{\rho}(i) \propto \exp \left(\frac{\overline{r}-r_i}{\tau}  \right) \nonumber,\bar{\rho}(i) \propto \exp \left(\frac{c_i-\overline{c}}{\tau}  \right). \nonumber
\end{align*}
    \STATE {\bf  Primal-dual update:} Obtain $\pi_{t+1}$ and $\lambda_{t+1}$ by ~\eqref{equ:pi_t} and~\eqref{equ:lambda_t}.
\ENDFOR
\STATE \textbf{return} The best valid $\pi_{0:T}$ on the validation data.
\end{algorithmic}
\end{algorithm}

\section{Theoretical Results} 
\label{section4: theory}

In this section, we first show that the optimization problem admits a unique solution, and then demonstrate linear convergence in terms of the KL divergence between the last iterate and the optimal router policy. While recent works study routing strategies for LLMs with a focus on practical model selection and performance--cost tradeoffs~\citep{ong2025routellm,liang2025thinkswitcher,zhang2025synapseroute}, these approaches are primarily empirical and heuristic in nature. To the best of our knowledge, this work is the \emph{first} to provide theoretical guarantees on \emph{the convergence behavior of an LLM routing policy}.

\subsection{Uniqueness of the Optimal Router Policy}
The policy optimization problem can be interpreted as finding a saddle point of the following max–min problem,
\begin{equation}
\label{eq:maxmin}
    \max_{\pi \in \Pi} \min_{\lambda \ge 0} \mathcal{L}_\beta(\pi,\lambda).
\end{equation}
After establishing feasibility (i.e., the constraint set is nonempty) 
and showing that the solution space is bounded,
we then prove that the saddle point exists and is unique, as stated in Theorem~\ref{thm:unique}. All proofs are provided in Appendix~\ref{subsec:unique_saddle}.

\begin{theorem}[Existence and Uniqueness of the Saddle Point]
\label{thm:unique}
    There exists a unique pair $(\pi^\ast,\lambda^\ast) \in \Pi \times \Lambda$ such that $\mathcal{L}_\beta(\pi, \lambda^\ast) \le \mathcal{L}_\beta(\pi^\ast, \lambda^\ast) \le \mathcal{L}_\beta(\pi^\ast, \lambda)$ for any $\pi \in \Pi$ and $\lambda \in \Lambda$, that is, $(\pi^\ast,\lambda^\ast)$ is the saddle point of $\mathcal{L}_\beta(\pi, \lambda)$. 
\end{theorem}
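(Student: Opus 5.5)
We plan to argue through strong concavity--strong convexity of $\mathcal{L}_\beta$ together with a minimax theorem. We work in the tabular (finite) setting implied by the empirical distribution $\rho_n$. A policy is then a vector $\pi=(\pi(\cdot\mid z_i))_{i=1}^n\in\Delta(\{0,1\})^n$, which is a compact convex set.

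We also fix the reweighted distributions $\underline{\rho},\overline{\rho}$ given by Theorem~\ref{method:theorem}. With these fixed, $R_{\underline{\rho}}(\pi)=\sum_i\underline{\rho}(i)\sum_a\pi(a\mid z_i)r(z_i,a,l_i)$ and $C_{\overline{\rho}}(\pi)$ are linear in $\pi$. Consequently
\[
\mathcal{L}_\beta(\pi,\lambda)=R_{\underline{\rho}}(\pi)-\lambda C_{\overline{\rho}}(\pi)+\beta\mathcal{H}(\pi)+\tfrac{\beta}{2}\lambda^2
\]
is $\beta$-strongly concave in $\pi$ on the simplex product, measured in the weighted $\ell_1$/KL geometry, provided $\rho_n(i)>0$. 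This holds because the negative entropy is $1$-strongly convex with respect to $\|\cdot\|_1$ by Pinsker's inequality. The same expression is $\beta$-strongly convex and continuous in $\lambda$.

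\textbf{Step 1: bound the dual domain.} We restrict $\lambda$ to a compact interval $\Lambda=[0,\lambda_{\max}]$. Feasibility means there is a Slater policy $\tilde\pi$ with $C_{\overline{\rho}}(\tilde\pi)<C$; we take $\tilde\pi$ to be the policy that always selects the cheap mode and assume $C$ exceeds its cost. Since the reward lies in $[0,1]$ and the entropy lies in $[0,\log 2]$, the standard Slater argument bounds any optimal multiplier:
\[
\lambda^\ast\le \frac{1+\beta\log 2}{C-C_{\overline{\rho}}(\tilde\pi)}=:\lambda_{\max}.
\]
Alternatively, the $\tfrac{\beta}{2}\lambda^2$ term forces the inner minimizer to satisfy $\lambda\le \max_\pi |C_{\overline{\rho}}(\pi)|/\beta$, because $\arg\min_{\lambda\ge0}$ of a function that is quadratic in $\lambda$ equals $\max\{0,C_{\overline{\rho}}(\pi)/\beta\}$. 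This second route gives a compact $\Lambda$ with no Slater assumption, so we will use it as the primary argument.

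\textbf{Step 2: existence.} Both $\Pi$ and $\Lambda$ are compact and convex, and $\mathcal{L}_\beta$ is continuous, concave in $\pi$ and convex in $\lambda$. Sion's minimax theorem therefore gives $\max_\pi\min_\lambda=\min_\lambda\max_\pi$. Continuity and compactness ensure that both outer optima are attained, at $\pi^\ast$ and $\lambda^\ast$ respectively. The pair $(\pi^\ast,\lambda^\ast)$ is then a saddle point by the usual chain
\[
\mathcal{L}_\beta(\pi,\lambda^\ast)\le\max_{\pi'}\mathcal{L}_\beta(\pi',\lambda^\ast)=\min_{\lambda'}\mathcal{L}_\beta(\pi^\ast,\lambda')\le \mathcal{L}_\beta(\pi^\ast,\lambda).
\]

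\textbf{Step 3: uniqueness.} Suppose $(\pi_1,\lambda_1)$ and $(\pi_2,\lambda_2)$ are both saddle points. Saddle points of a convex--concave function form a product set, so $(\pi_1,\lambda_2)$ is also a saddle point. Then $\pi_1$ and $\pi_2$ both maximize the strictly concave function $\mathcal{L}_\beta(\cdot,\lambda_2)$, which forces $\pi_1=\pi_2$. Likewise $\lambda_1$ and $\lambda_2$ both minimize the strictly convex function $\mathcal{L}_\beta(\pi_1,\cdot)$, so $\lambda_1=\lambda_2$.

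The main obstacle is the subtlety that strict concavity in $\pi$ only holds on the support of $\rho_n$, and the regularizer $\mathcal{H}(\pi)$ is defined with respect to $\rho$. We will resolve this by identifying policies that agree $\rho_n$-almost surely, which is natural since $\Pi$ only acts on $\{z_i\}$. A second subtlety is that $\underline{\rho}$ and $\overline{\rho}$ in \eqref{eq: dro 3} in principle depend on $\pi$. If that dependence is kept, $R_{\mathcal{U}}$ remains concave as a minimum of linear functions and $C_{\mathcal{U}}$ remains convex as a maximum of linear functions, so $-\lambda C_{\mathcal{U}}$ is still concave for $\lambda\ge0$, and the argument above goes through unchanged.
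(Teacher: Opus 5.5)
Your proposal is correct. It shares the paper's skeleton: Sion's theorem for existence, and strict concavity from the entropy plus strict convexity from $\tfrac{\beta}{2}\lambda^2$ for uniqueness. Several supporting steps, however, differ genuinely.

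The paper bounds the dual domain by a Slater argument. The always-non-reasoning policy has slack $\xi\le C-1$, and a sublevel-set estimate gives $\Lambda=[0,(\mathcal{L}_\beta^{\lambda^\ast}-\mathbb{P}_{\underline{\rho}}(\phi_0(z)=l))/\xi]$. Your primary route instead reads the bound off the quadratic regularizer. The minimizer of $\mathcal{L}_\beta(\pi,\cdot)$ over $\lambda\ge0$ is $\max\{0,\cdot/\beta\}$, and this holds with or without the budget shift $-\lambda(C_{\overline{\rho}}(\pi)-C)$ used in the appendix.

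Your bound needs no feasibility assumption and gives an a priori interval, whereas the paper's interval is written in terms of $\lambda^\ast$ itself. The cost is that your bound scales like $1/\beta$. The Slater bound $(1+\beta\log 2)/\xi$ stays bounded as $\beta\to0$, which is what matters for relating the regularized saddle point back to the original constrained problem.

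To match the statement exactly, add one remark. A saddle point on $\Pi\times[0,\lambda_{\max}]$ is also one on $\Pi\times[0,\infty)$, and hence on any interval containing $\lambda^\ast$ (including the paper's $\Lambda$). This holds because the minimizer of $\mathcal{L}_\beta(\pi^\ast,\cdot)$ over $[0,\infty)$ already lies in $[0,\lambda_{\max}]$.

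Your other changes add rigor the paper lacks:
\begin{enumerate}
\item Compactness of the finite product of simplices gives attainment of the outer maximum in $\pi$. The paper applies Sion with only $\Lambda$ compact and a general convex $\Pi$, which yields equality of values but not a saddle point by itself.
\item Your interchangeability argument actually proves the uniqueness that the paper only asserts.
\item Concavity of $R_{\mathcal{U}}$ and convexity of $C_{\mathcal{U}}$ extend the result to $\pi$-dependent worst-case distributions, which the paper freezes.
\end{enumerate}

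One precision: your resolution of the $\rho$ versus $\rho_n$ issue really amounts to weighting the entropy by $\rho_n$, as your strong-concavity computation already does. Identifying policies $\rho_n$-a.s.\ alone would not make a $\rho$-weighted entropy well defined on equivalence classes if $\rho$ does not charge the $z_i$. The paper makes the same identification implicitly.
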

As a consequence, for any $(\pi,\lambda) \in \Pi \times \Lambda$,
\begin{equation*}
     \mathcal{L}(\pi, \lambda^\ast)- \beta \mathcal{H}(\pi) \le \mathcal{L}(\pi^\ast, \lambda^\ast) \le \mathcal{L}(\pi^\ast, \lambda) + \frac{\beta}{2}\lambda^2,
\end{equation*}
which indicates that \((\pi^\ast, \lambda^\ast)\) is a saddle point of the original Lagrangian \(\mathcal{L}(\pi, \lambda)\), up to two \(\beta\)-regularization terms.

Theorem~\ref{thm:unique} identifies a unique target for the primal--dual iterates. In the next subsection, we show that the updates~\eqref{equ:pi_t} and~\eqref{equ:lambda_t} converge to this unique saddle point.

\subsection{Convergence of the Router Policy}
We next analyze the last-iterate convergence of our router policy $\pi_t$ to the optimal policy $\pi^\ast$. 
To quantify the convergence, we measure the distance between $\pi_t$ and $\pi^\ast$ under the distribution $\rho$ using the KL divergence, defined as 
$$\mathrm{KL}(\pi_t \Vert \pi^\ast) = \mathbb{E}_{(z,l)\sim \rho} \mathrm{KL}(\pi_t(\cdot \vert z) \Vert \pi^\ast(\cdot \vert z)),$$
where 
$$\mathrm{KL}(\pi_t(\cdot \vert z) \Vert \pi^\ast(\cdot \vert z))=\sum_{a=0}^1 \pi_t(a\vert z)\log \left(\frac{\pi_t(a\vert z)}{\pi^\ast(a\vert z)} \right).$$

To derive a closed-form solution for~\eqref{equ:pi_t}, we rewrite it as
{\small
\begin{align}
\label{equ:pi_t_rewight}
\pi_{t+1} = \arg \max _{\pi \in \Pi} 
&\mathbb{E}_{(z,l)\sim {\rho}, a\sim \pi(\cdot \mid z)}\left[\frac{p_{\underline{\rho}}}{p_{{\rho}}} r(z, a, l)-\lambda_t \frac{p_{\overline{\rho}}}{p_{{\rho}}} c(z, a)\right]  \nonumber \\
&+ \beta \, \mathbb{E}_{(z,l) \sim {\rho}} \left[\mathcal{H}\left(\pi(\cdot \mid z)\right)\right], 
\end{align}
}

where $p_{\overline{\rho}}$, $p_{\underline{\rho}}$,  and $p_{{\rho}}$ denote the densities of the distributions $\overline{\rho}$, $\underline{\rho}$,  and ${\rho}$, respectively.

We then propose Assumptions~\ref{ass:convex_pi}, \ref{ass:bound_c}, and~\ref{ass:bound_ratio} to establish the convergence result in Theorem~\ref{thm:convergence}.

\begin{assumption}[Convexity of the policy class $\Pi$]
\label{ass:convex_pi}
The interested policy class $\Pi$ is convex, i.e., for any $\pi_1,\pi_2 \in \Pi$ and any $\alpha\in[0,1]$, $\alpha \pi_1 + (1-\alpha)\pi_2 \in \Pi$.
\end{assumption}

\begin{assumption}[Boundness of $c$]
\label{ass:bound_c}
There exists a constant $M > 0$, such that $c(z,a) \le M$ for all $z \sim \rho$ and $a \in \{0,1\}.$
\end{assumption}

\begin{assumption}[Boundness of density ratio]
\label{ass:bound_ratio}
There exists a constant $K>0$, such that ${p_{\overline{\rho}}}/{p_{{\rho}}} \le K$ for all $z \in \mathcal{Z}$.
\end{assumption}
Assumption~\ref{ass:convex_pi} is standard and has been widely adopted in convex optimization and policy optimization analyses~\citep[see, e.g.,][]{mutti2023convex,ding2023last}.
Assumption~\ref{ass:bound_c} ensures that the cost $c(z,a)$ is bounded and has bounded variance, while Assumption~\ref{ass:bound_ratio} controls the amplification induced by importance reweighting.
Together, these conditions imply that the dual gradient term
\((p_{\overline{\rho}}/{p_\rho})\,c(z,a)\)
is uniformly bounded.
This uniform bound allows us to control how much the policy $\pi_{t+1}$ responds to changes in the dual variable $\lambda_t$, which is a key ingredient in establishing the linear convergence result below.

\begin{theorem}[Linear Convergence of RACER]
\label{thm:convergence}
Under Assumptions~\ref{ass:convex_pi}, \ref{ass:bound_c}, and~\ref{ass:bound_ratio}, the router policy iterates satisfy
\begin{equation*} 
\mathrm{KL}(\pi_t \Vert \pi^\ast) \le \frac{M^2 K^2}{2 \beta^2} \left(\frac{M^2 K^2}{M^2 K^2+2\beta^2} \right)^{2t}(\lambda_0-\lambda^\ast)^2. 
\end{equation*}
\end{theorem}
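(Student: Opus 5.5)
The plan is to prove the bound in two steps. First, show that the dual iterates $\lambda_t$ contract linearly toward $\lambda^\ast$. Second, convert dual error into primal KL error through the softmax closed form of the primal update. Let $g(z,a) := (p_{\overline{\rho}}/p_\rho)\,c(z,a)$ and $h(z,a):=(p_{\underline{\rho}}/p_\rho)\,r(z,a,l)$. By the reweighted form \eqref{equ:pi_t_rewight}, the entropy-regularized maximization decouples pointwise in $z$, provided $\Pi$ is rich enough that the unconstrained maximizer lies in $\Pi$; Assumption~\ref{ass:convex_pi} is what we lean on here. It gives
\[
\pi_{t+1}(a\mid z)\propto \exp\bigl((h(z,a)-\lambda_t g(z,a))/\beta\bigr).
\]
By Theorem~\ref{thm:unique}, the saddle point satisfies the same fixed-point relation with $\lambda^\ast$: $\pi^\ast = \pi_{\lambda^\ast}$. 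Hence both $\pi_t$ and $\pi^\ast$ lie on the one-parameter family $\lambda\mapsto\pi_\lambda$.

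\textbf{Primal step (KL in terms of $\lambda$).} For two softmax policies with logits $\theta_1,\theta_2$, the standard log-partition smoothness bound gives $\mathrm{KL}(\pi_{\theta_1}\Vert\pi_{\theta_2})\le \tfrac12\|\theta_1-\theta_2\|_{\mathrm{osc}}^2/4$, or more crudely $\tfrac12\|\theta_1-\theta_2\|_\infty^2$. The latter follows because KL between exponential-family members is a Bregman divergence of the log-partition, whose Hessian is a covariance bounded by the squared range. Here $\theta_1-\theta_2 = -(\lambda_{t-1}-\lambda^\ast)g/\beta$. Assumptions~\ref{ass:bound_c} and~\ref{ass:bound_ratio} give $0\le g\le MK$, so the logit difference is at most $MK|\lambda_{t-1}-\lambda^\ast|/\beta$. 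This yields
\[
\mathrm{KL}(\pi_t\Vert\pi^\ast)\le \frac{M^2K^2}{2\beta^2}(\lambda_{t-1}-\lambda^\ast)^2 .
\]
This matches the prefactor in the theorem, up to the index shift, which is absorbed by the extra contraction factor.

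\textbf{Dual step (linear contraction).} Interpret \eqref{equ:lambda_t} as the regularized dual best response
\[
\lambda_{t+1}=\max\{0,\,C_{\overline\rho}(\pi_{t+1})/\beta\},
\]
or its constraint-shifted version $\max\{0,(C_{\overline\rho}(\pi_{t+1})-C)/\beta\}$. The saddle point gives $\lambda^\ast=\max\{0,(C_{\overline\rho}(\pi^\ast)-C)/\beta\}$. Since projection onto $[0,\infty)$ is nonexpansive,
\[
|\lambda_{t+1}-\lambda^\ast|\le \beta^{-1}\,|C_{\overline\rho}(\pi_{\lambda_t})-C_{\overline\rho}(\pi_{\lambda^\ast})|.
\]
The map $\lambda\mapsto C_{\overline\rho}(\pi_\lambda)=\mathbb{E}_\rho\mathbb{E}_{\pi_\lambda}[g]$ is monotone decreasing in $\lambda$, with derivative $-\beta^{-1}\mathbb{E}_\rho\mathrm{Var}_{\pi_\lambda}(g)$. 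For binary actions this variance is at most $(MK)^2/4$.

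\textbf{Main obstacle.} A naive Lipschitz bound only gives the factor $M^2K^2/(4\beta^2)$, which is not a contraction unless $\beta$ is large. The stated rate $\kappa = M^2K^2/(M^2K^2+2\beta^2)<1$ holds for every $\beta$, and getting it is the hardest step. The plan is to exploit monotonicity. Write $\lambda_{t+1}-\lambda^\ast = -\beta^{-1}\bar V(\lambda_t-\lambda^\ast)$, where $\bar V\in[0,V_{\max}]$ is an averaged variance over the segment (mean value theorem). Rather than analyzing the raw best-response step, treat the combined update as a damped fixed-point iteration. Equivalently, analyze the potential $\Psi_t=\beta(\lambda_t-\lambda^\ast)^2+\text{(primal KL terms)}$, using the three-point/Bregman identity for the entropy-regularized update as in \citet{ding2023last}. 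Combining strong concavity in $\pi$ (modulus $\beta$ via entropy, Pinsker) with strong convexity in $\lambda$ (modulus $\beta$), and bounding the coupling term $\lambda\,\mathbb{E}[g]$ by Young's inequality with weight chosen against $MK$, should give
\[
|\lambda_{t+1}-\lambda^\ast|^2\le \kappa^2|\lambda_t-\lambda^\ast|^2 .
\]
Here the $2\beta^2$ in the denominator arises as the sum of the two $\beta$-strong-monotonicity moduli against the coupling constant $M^2K^2$. If the clean monotone-operator route fails, the fallback is to prove $|\lambda_{t}-\lambda^\ast|\le\kappa^{t}|\lambda_0-\lambda^\ast|$ directly from the scalar recursion, using that $\bar V\ge 0$ prevents overshoot. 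Iterating and substituting into the primal step then gives $\mathrm{KL}(\pi_t\Vert\pi^\ast)\le \frac{M^2K^2}{2\beta^2}\kappa^{2t}(\lambda_0-\lambda^\ast)^2$.
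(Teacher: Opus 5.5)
The gap is in your dual step. Your primal half is fine and is the paper's Steps 1 and 3: the softmax closed form, and the log-partition/Bregman bound $\mathrm{KL}\le\frac{M^2K^2}{2\beta^2}(\text{dual error})^2$.

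You read \eqref{equ:lambda_t} as the exact best response $\lambda_{t+1}=[(C_{\overline\rho}(\pi_{t+1})-C)/\beta]_+$. That display is garbled, so the reading is understandable, but for this update the theorem is false, and neither of your rescues can succeed. The linearized multiplier is $-\beta^{-2}\mathbb{E}_\rho\mathrm{Var}_{\pi_\lambda}(g)\le 0$. A nonpositive multiplier does not prevent overshoot: it \emph{is} overshoot, since the error flips sign every step. Once its magnitude exceeds $1$, $\lambda^\ast$ repels.

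Concretely, take a single context (so all density ratios equal $1$) with $c(z,0)=1$, $c(z,1)=11$, $r(z,1,l)=1$, $r(z,0,l)=0$, $\beta=1$ and $C=5.9$. Then:
\begin{itemize}
\item $\pi_\lambda(1\mid z)=\sigma(1-10\lambda)$, with $\sigma$ the logistic function.
\item The saddle point is $\lambda^\ast=0.1$, with $\pi^\ast(1\mid z)=1/2$.
\item The best-response map $F(\lambda)=[10\,\sigma(1-10\lambda)-4.9]_+$ has $F'(\lambda^\ast)=-25$.
\end{itemize}
So the iterates leave $\lambda^\ast$ and settle into the 2-cycle $\{0,\,F(0)\approx 2.41\}$, even though $\kappa=121/123<1$. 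No potential-function or Young's-inequality argument can repair this, because the conclusion itself fails.

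The missing idea is that the dual step must be damped. The paper's proof uses the appendix update \eqref{equ:lambda_t_iter}: projected gradient descent on the one-dimensional dual function $d(\lambda)=\max_\pi\mathcal{L}_\beta(\pi,\lambda)$, namely $\lambda_{t+1}=[\lambda_t-\eta d'(\lambda_t)]_+$. Here $d'(\lambda)=C-C_{\overline\rho}(\pi_\lambda)+\beta\lambda$ and $\eta=2\beta/(M^2K^2+2\beta^2)$.

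You already have the key computation. Differentiating $d'$ once more gives $d''(\lambda)=\beta+\beta^{-1}\mathbb{E}_\rho\mathrm{Var}_{\pi_\lambda}(g)\in[\beta,\,\beta+M^2K^2/\beta]$. So $d$ is $\mu$-strongly convex and $L$-smooth with $\mu=\beta$ and $L=\beta+M^2K^2/\beta$, and $\eta=2/(\mu+L)$. By the mean value theorem, the map $\lambda\mapsto\lambda-\eta d'(\lambda)$ has derivative $1-\eta d''\in[1-\eta L,\,1-\eta\mu]=[-\kappa,\kappa]$. Combining this with the fixed-point property $\lambda^\ast=[\lambda^\ast-\eta d'(\lambda^\ast)]_+$ and nonexpansiveness of $[\cdot]_+$ gives $|\lambda_{t+1}-\lambda^\ast|\le\kappa|\lambda_t-\lambda^\ast|$.

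The $2\beta^2$ in the rate is just $\beta(L+\mu)=2\beta^2+M^2K^2$, i.e.\ the classical $(L-\mu)/(L+\mu)$ rate, not a Young's-inequality balance. Your ``damped fixed-point iteration'' remark points the right way, but until you commit to this step size the rate $\kappa$ never appears.

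A smaller point on indexing: since $\pi_t=\pi_{\lambda_{t-1}}$, your primal bound yields $\kappa^{2(t-1)}$. That is weaker than the stated $\kappa^{2t}$ by a factor $\kappa^{-2}>1$, so the shift is not absorbed in general. The paper gets the stated exponent only because its Step 3 identifies $\pi_t$ with the softmax policy at $\lambda_t$. You should either adopt that convention explicitly or state the bound with $t-1$.
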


Theorem~\ref{thm:convergence} establishes that the iterates $\pi_t$ converge to the optimal policy $\pi^\ast$ at a linear rate, depending on $M$, $K$, and $\beta$. Details and the proof are deferred to Appendix~\ref{app:linear}. 
Importantly, our convergence results can be extended to the parameterized (non-convex) setting under additional assumptions on the approximation quality of the policy class $\Pi$. In particular, if each policy update approximates the optimal entropy-regularized solution up to a controlled error, then the last-iterate convergence result continues to hold up to an additional approximation error term, as commonly studied in prior work~\citep{ding2023last,zhan2023policy}.

\section{Experiments}

\subsection{Experimental Setup}

We conduct two complementary sets of experiments with a shared training and judgment generation protocol.
The first focuses on controlled ablations designed to isolate the roles of reward and cost robustness under targeted distribution shifts.
The second evaluates end-to-end routing performance on real world LLM-as-a-Judge benchmarks to assess generalization under more realistic evaluation distributions.

Across both experimental settings, we use a unified data generation protocol to learn a router from the preference data.
Given any preference dataset consisting of a prompt and a response pair $(x_i,y_{i,1},y_{i,2})$, we prompt an LLM judge under two inference modes (instruct and reasoning) to select its preferred response.
For each instance, we record (i) whether the judge’s decision matches the ground-truth preference label, $r_i$, and (ii) the number of tokens consumed under each inference mode, $c_i$. The router takes as input a text embedding of the concatenated context $z_i=(x_i,y_{i,1},y_{i,2})$ obtained from an embedding model.

Details of the judging prompts, decoding configurations, and the embedding model are provided in Appendix~\ref{app:exp}.
All training and evaluation sets in the following experiments are constructed using this protocol, differing only in the underlying dataset.

\begin{table}[t]
\centering
\small
\caption{Dataset statistics and cost ratios.}
\label{table:data_statistics}
\begin{tabular}{lccc}
\toprule
Subset & Split & \# Pairs & Cost Ratio \\
\midrule
Magpie Ultra & Train 1 & $27{,}785$ & $11.2$ \\
WildGuardMix & Train 2 & $6{,}709$ & $3.4$ \\
OffsetBias & OOD Test & $8{,}504$ & $4.7$ \\
\bottomrule
\end{tabular}
\label{table:data_stat}
\end{table}

\begin{figure}[t]
\centering
\begin{subfigure}[t]{0.49\linewidth}
\centering
\includegraphics[width=\linewidth]{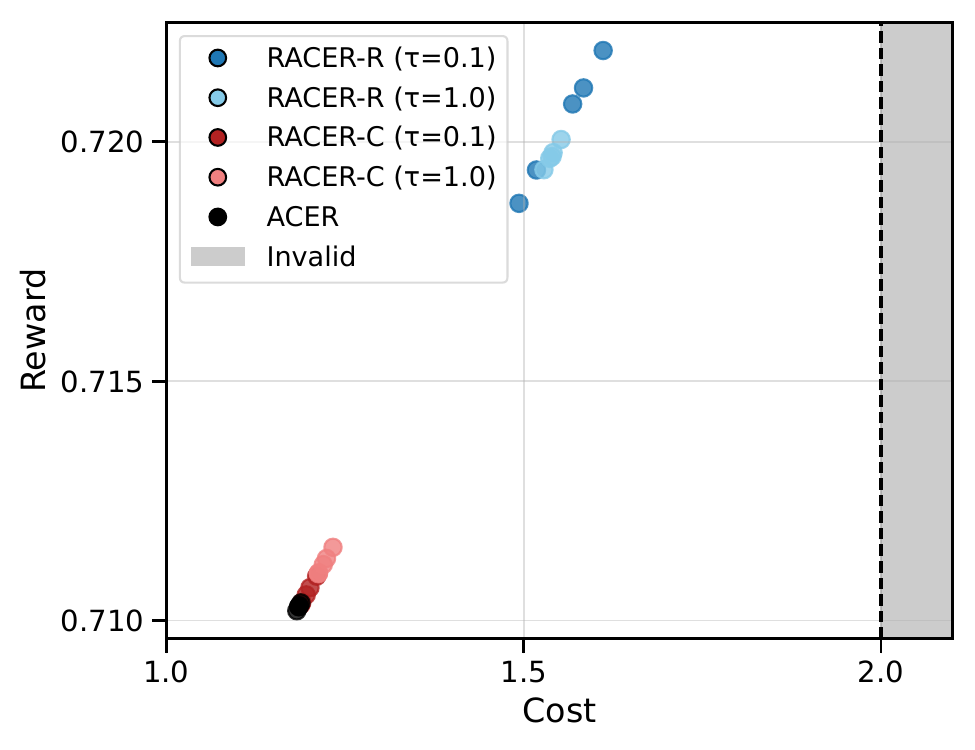}
\caption{Trained on Magpie Ultra and evaluated on OffsetBias, exhibiting a shift toward \textbf{lower-cost queries}.}
\end{subfigure}
\hfill
\begin{subfigure}[t]{0.49\linewidth}
\centering
\includegraphics[width=\linewidth]{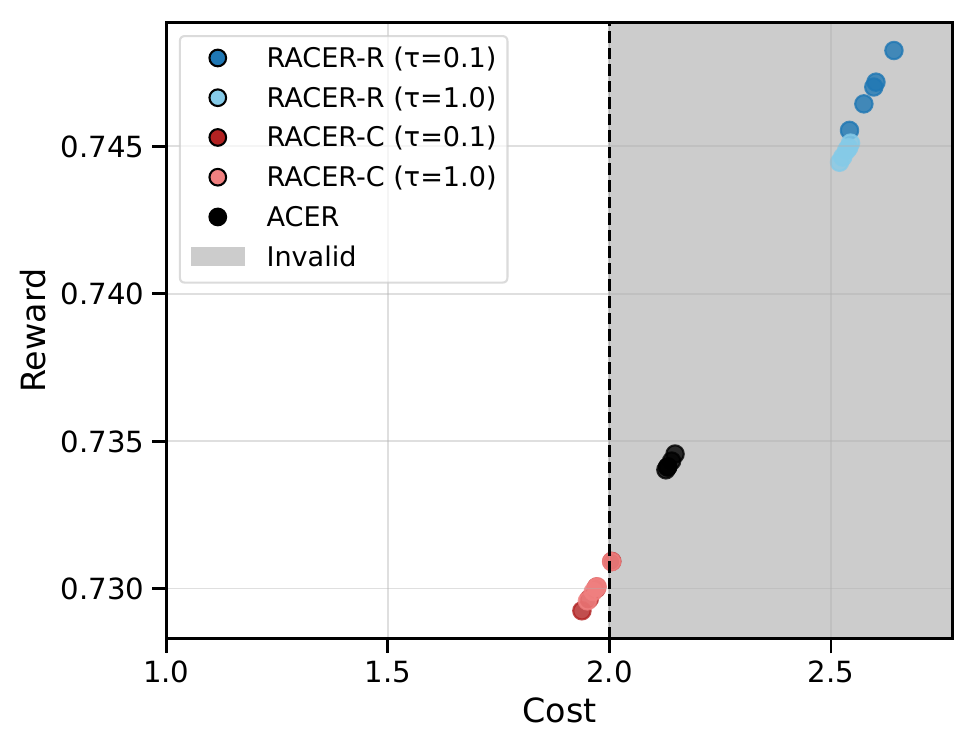}
\caption{Trained on WildGuardMix and evaluated on OffsetBias, exhibiting a shift toward \textbf{higher-cost queries}.}
\end{subfigure}
\caption{Reward–cost trade-offs evaluated on the OOD dataset with a budget of 2. 
(a) shows that reward-robust RACER-R improves performance when the cost constraint is easily satisfied. (b) illustrates the importance of cost-robust RACER-C when the cost constraint can be violated under distribution shift.}
\label{fig:exp-demo}
\end{figure}

\begin{figure*}[ht]
    \centering
    \begin{subfigure}{\textwidth}
        \centering
        \includegraphics[width=\linewidth]{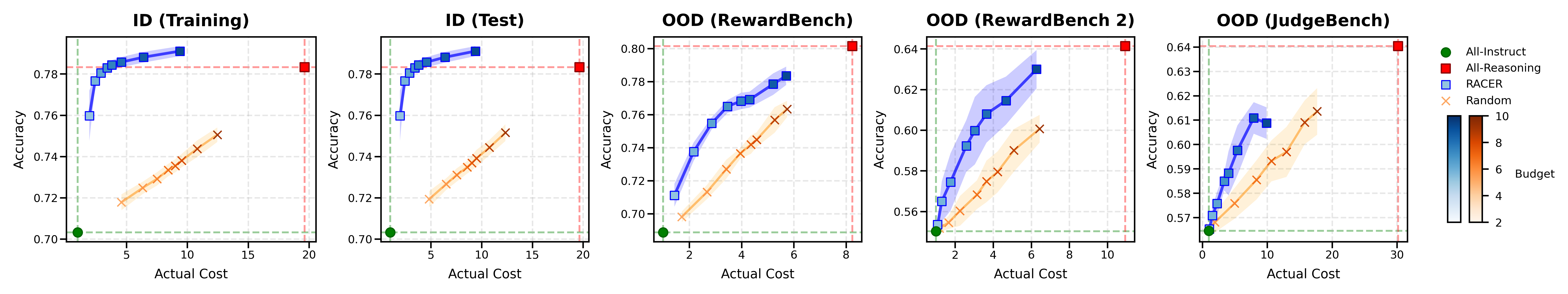}
    \end{subfigure}
    \begin{subfigure}{\textwidth}
        \centering
        \includegraphics[width=\linewidth]{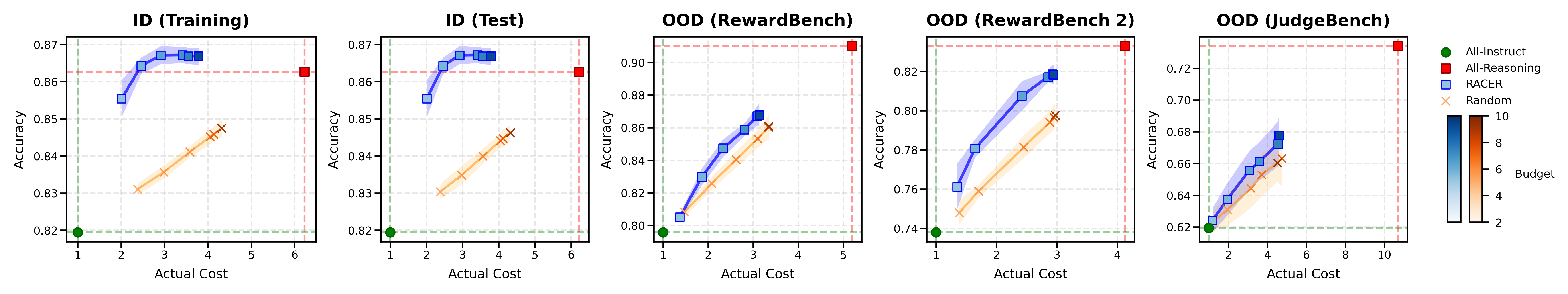}
    \end{subfigure}
    \begin{subfigure}{\textwidth}
        \centering
        \includegraphics[width=\linewidth]{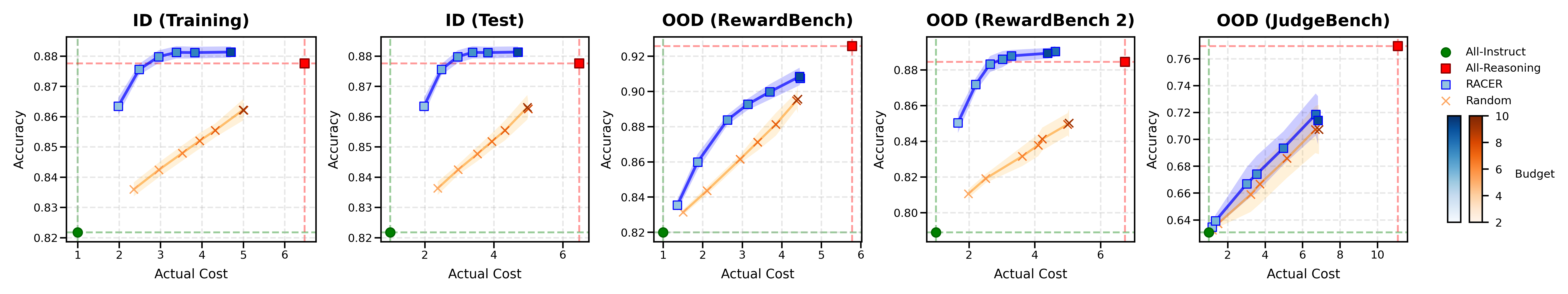}
    \end{subfigure}
    \caption{Routing performance across compute budgets on ID and OOD benchmarks (top to bottom: Qwen3-1.7B, 4B, and 8B).}
    \label{fig:experiments}
\end{figure*}

\subsection{Ablating Reward and Cost Robustness}
\label{subsec:exp1}

As shown in our robust objective (\ref{method:dro 2}), the proposed formulation incorporates two levels of robustness: robustness applied to the reward, denoted as $R_{\mathcal{U}\left(\rho_n, \delta\right)}(\pi)$, and robustness applied to the cost, denoted as $C_{\mathcal{U}\left(\rho_n, \delta\right)}(\pi)$. To better understand the contribution of each component, we conduct an ablation study to investigate (i) when reward robustness is necessary and (ii) when cost robustness is necessary. Specifically, we introduce two variants, RACER-R and RACER-C, which apply distributionally robust reweighting only to the reward and only to the cost, respectively. We also include ACER with the non-robust objective in (\ref{method:acer}) as a baseline. 

We consider two representative scenarios: (a) the OOD queries require less computation than in-distribution queries, and (b) the OOD queries require more computation than in-distribution queries. 
We construct our datasets from the \textsc{Skywork Reward Preference Dataset}~\cite{liu2024skywork}, with Magpie-Ultra and WildGuardMix used for training in Scenarios (a) and (b), respectively, and OffsetBias as the OOD test set. Judgments are produced by the hybrid reasoning model Qwen3-4B~\citep{yang2025qwen3}. Reward is defined as correctness, and cost as the relative token usage between reasoning and non-reasoning judgments (Table~\ref{table:data_stat}). We use bge-3~\citep{chen2024bge} to get embeddings of the query and responses, and a linear router.

As shown in the left panel of Figure~\ref{fig:exp-demo}, the OOD shift favors lower-cost queries, all methods satisfy the budget constraint across runs. RACER-R consistently achieves the highest reward, with stronger robustness (lower temperature) yielding further gains, indicating that when OOD cost is low, emphasizing reward robustness enables more effective budget utilization. In contrast, in the right panel, RACER-C attains a lower reward than RACER-R and ACER but consistently maintains OOD costs within budget, demonstrating that cost robustness provides a safer strategy when in-distribution cost is low but OOD shifts increase violation risk. Overall, RACER variants exhibit greater robustness in both reward and cost compared to the non-robust baseline ACER.

\subsection{Scaling Evaluation to Standard Benchmarks}
\label{subsec:exp2}

\textbf{Training data.}
Following the data-generation protocol, we construct the training set from (i) $20{,}000$ instances from \textsc{Skywork-Reward}~\cite{liu2024skywork}, and (ii) an additional $20{,}000$ instances sampled from \textsc{Math-Step-DPO-10K}~\cite{lai2024step} and \textsc{Code-Preference-Pairs}~\cite{vezora2024codepreference}. We include the math and code data to improve coverage of reasoning-intensive domains, which are under-represented in \textsc{Skywork-Reward}. We generate judgments by pairing the reasoning and non-reasoning variants of Qwen3-1.7B/4B/8B~\citep{yang2025qwen3}. We also include Llama-3.1-8B~\citep{grattafiori2024llama} as an additional model family; the corresponding results, reported in Appendix~\ref{app:add_model_family}, exhibit a similar trend, indicating that RACER generalizes across model families.

\textbf{Evaluation datasets.}
We evaluate the learned routing policy on standard LLM-as-a-Judge benchmarks spanning diverse task domains, including \textsc{RewardBench}~\cite{lambert2025rewardbench}, \textsc{RewardBench-2}~\cite{malik2025rewardbench2advancingreward}, and \textsc{JudgeBench}~\cite{judgebench2024}.
All evaluation benchmarks are held out from training and are used exclusively for out-of-distribution evaluation. Judgments are generated using the same model pairs.

\textbf{Baselines.}
We compare against three baselines:\\
$\bullet$ \textbf{All-Instruct}, which always selects the instruct mode (cost fixed to $1$ and accuracy equal to the instruct judge’s average correctness);\\
$\bullet$ \textbf{All-Reasoning}, which always selects the reasoning mode (accuracy and cost given by the reasoning judge’s average correctness and token consumption, respectively);\\
$\bullet$ \textbf{Random}, which activates the reasoning mode independently for each instance with probability equal to the learned policy’s average reasoning rate.

We evaluate the routing policy under compute budgets $C \in \{2, 2.5, 3, 3.5, 4, 5, 7, 10\}$.
For each budget, we repeat training and evaluation $20$ times and report the mean and standard deviation of accuracy and realized cost ratio. Our routing policy is parametrized by a four-layer neural network. See Appendix~\ref{app:implementation} and~\ref{app:hyperparameter_tuning} for detailed implementation and hyperparameter choices.

Fig.~\ref{fig:experiments} reports accuracy–cost trade-offs under varying budgets for three model scales. On both the training split and the in-distribution test split, RACER achieves a notably favorable regime: at roughly half of the All-Reasoning cost, it matches and often surpasses the All-Reasoning judge’s accuracy, indicating that reasoning can be concentrated on the subset of instances where it delivers the largest marginal benefit. Under OOD benchmarks, RACER continues to yield consistent improvements over the baselines, demonstrating that the learned selection rule generalizes beyond the training distribution. The Random baseline forms an approximately linear interpolation between the All-Instruct and All-Reasoning endpoints, corresponding to indiscriminate activation of reasoning at a fixed rate. In contrast, RACER traces a concave, higher-accuracy frontier; the area between the RACER curve and the Random curve reflects the additional gain from strategic instance-level selection, which cannot be explained by merely increasing the overall reasoning rate, but instead by allocating reasoning to the right examples under the same budget.

\subsection{Additional Results}

\begin{table}[h]
\centering
\small
\setlength{\tabcolsep}{3pt}
\caption{Accuracy and cost comparison across model scales with advanced baselines.}
\label{tab:routing_results}
\begin{tabular}{lcccccc}
\toprule
\multirow{2}{*}{Method} 
& \multicolumn{2}{c}{1.7B} 
& \multicolumn{2}{c}{4B} 
& \multicolumn{2}{c}{8B} \\
\cmidrule(lr){2-3} \cmidrule(lr){4-5} \cmidrule(lr){6-7}
& Acc. & Cost & Acc. & Cost & Acc. & Cost \\
\midrule
RouterBench-KNN & $71.3$ & $2.6$ & $84.1$ & $2.5$ & $86.8$ & $2.6$ \\
RouteLLM-MF     & $69.4$ & $3.8$ & $84.7$ & $3.4$ & $88.2$ & $4.1$ \\
M-IRT           & $71.6$ & $3.4$ & $84.3$ & $2.7$ & $88.9$ & $3.4$ \\
\midrule
RACER           & $\mathbf{72.2}$ & $3.6$ & $\mathbf{85.8}$ & $3.4$ & $\mathbf{90.0}$ & $3.9$ \\
\bottomrule
\end{tabular}
\label{tab:comp_adv_baselines}
\end{table}

\textbf{Comparison with other advanced baselines.}
We also compare RACER with three representative adavcned baselines: RouterBench-KNN~\citep{hu2024routerbench}, RouteLLM-MF~\citep{ong2024routellm}, and M-IRT~\citep{song2025irt}, under the same budget constraint ($C=4$). The results are averaged on $3$ test datasets and over $10$ replications . As shown in Table~\ref{tab:comp_adv_baselines}, RACER achieves the highest accuracy across all three model families while remaining within the budget. For the $1.7$B, $4$B, and $8$B models, RACER improves over the strongest baseline by $0.64$, $1.10$, and $1.06$ percentage points, respectively. These results suggest that RACER exploits the available budget more effectively by allocating reasoning to instances where it is most beneficial, yielding a stronger reward--cost trade-off than prior routing methods. Since RACER is a general framework, in principle, it can be combined with more advanced routing architectures to improve performance further.

\textbf{Ablation and Sensitivity Analysis of Entropy Regularizer. }We conduct an ablation and sensitivity analysis on $\beta$ (weight for entropy regularization) at 3 representative budget levels, using Qwen3-4B judge pair. As shown in Table~\ref{tab:beta_sensitivity}, removing the entropy regularizer hurts accuracy under the tight budget, confirming its contribution when the constraint is binding. Performance remains stable for $\beta\in\{0.005,0.01\}$, while $\beta=0.05$ consistently degrades accuracy, supporting our default choice of $\beta$.

\begin{table}[t]
\centering
\small
\caption{Sensitivity of RACER to $\beta$ under different budgets using data generated by the Qwen3-4B judge pair.}
\label{tab:beta_sensitivity}
\begin{tabular}{lcccccc}
\toprule
$\beta$ 
& \multicolumn{2}{c}{$C=2$} 
& \multicolumn{2}{c}{$C=3$} 
& \multicolumn{2}{c}{$C=4$} \\
\cmidrule(lr){2-3} \cmidrule(lr){4-5} \cmidrule(lr){6-7}
& Acc. & Cost & Acc. & Cost & Acc. & Cost \\
\midrule
$0$                 & $85.2$ & $1.9$ & $86.7$ & $2.9$ & $86.8$ & $3.7$ \\
$0.005$   & $85.5$ & $2.0$ & $86.7$ & $2.9$ & $86.7$ & $3.6$ \\
$0.01$              & $85.5$ & $2.0$ & $86.7$ & $2.9$ & $86.7$ & $3.8$ \\
$0.05$              & $84.8$ & $2.0$ & $86.0$ & $2.9$ & $86.2$ & $3.8$ \\
\bottomrule
\end{tabular}

\end{table}

\section{Related Work}

\textbf{LLM-as-a-Judge and Reasoning.}
Human evaluation is considered the gold-standard metric for assessing LLM-generated content \citep{ouyang2022training,zheng2023judging}. However, it is costly and time-consuming, making it difficult to scale in practice. To address this, LLM-as-a-Judge has been proposed as an automatic proxy \citep{zheng2023judging,alpaca_eval,liu2023g,kim2023prometheus,fu2024gptscore}. One advantage is that the LLM can provide explanations for its final judgments, which facilitates error analysis. To evaluate how well LLM-as-a-Judge truly performs, benchmarks have been introduced to measure the accuracy of judgments across different domains \citep{tan2024judgebench,liu2024rm,lambert2025rewardbench}. There are several recent studies have shown that incorporating reasoning into models through reinforcement learning on judge-specific tasks can further enhance evaluation performance \citep{whitehouse2025j1,saha2025learning,chen2025judgelrm,chen2025rm}. Notably, our work demonstrates that even without judge-specific training, reasoning abilities acquired from general-domain training can still effectively transfer.

\textbf{LLM Routing.}
As the number of parameters in LLMs continues to grow, their inference cost increases substantially. To address this issue, LLM routing has emerged as an effective strategy that assigns user queries to appropriate LLMs while balancing both performance and cost. \citet{chen2023frugalgpt} adopted a cascading strategy, sequentially querying LLMs until a satisfactory response is produced. P2L \citep{frick2025prompt} employs a prompt-to-regression approach to predict a vector of Bradley–Terry coefficients, which are then used to select the optimal model. \citet{aggarwal2023automix} and \citet{ong2025routellm} trained a binary predictor to switch between a strong and weak model. While effective, this approach requires deploying multiple LLMs simultaneously. Recent studies \citep{liang2025thinkswitcher,zhang2025synapseroute} investigate mode switching within a single hybrid reasoning model, which is most closely related to our setting. However, we introduce a principled policy learning framework that not only maximizes performance under budget constraints, but also explicitly addresses \textbf{distribution shift}, an important yet largely overlooked challenge in existing routing systems. Furthermore, all prior studies predominantly focus on question-answering tasks, whereas we present the first comprehensive investigation of routing strategies in the LLM-as-judge domain.

\textbf{Distributionally Robust Learning.}
Distributionally Robust Optimization (DRO) has been extensively studied in machine learning \citep{shafieezadeh2015distributionally,deng2020distributionally}, statistics \citep{belloni2011square,duchi2021learning}, and operations research \citep{goh2010distributionally,kuhn2019wasserstein}. It is typically formulated as a minimax problem, in which an adversary perturbs the data-generating distribution within a prescribed uncertainty set to maximize the expected loss, while the learner optimizes model parameters to minimize this worst-case risk. Among various constructions of uncertainty sets, $f$-divergence balls are a commonly used choice \citep{hu2013kullback,namkoong2016stochastic,levy2020large,duchi2021learning}, due to their strong connections with classical divergence measures and well-studied statistical properties that facilitate analysis and implementation in DRO frameworks.

\section{Conclusion and Limitation}
In this work, we study whether reasoning capabilities acquired by LLMs from problem-solving tasks transfer to judgment tasks. We find that reasoning-based judges can substantially improve accuracy, but the gains are task-dependent and often incur significantly higher computational cost, particularly under distribution shift. To address these problems, we propose RACER, which adaptively activates reasoning judges under a fixed budget. We are also the first to provide a theoretical analysis with linear convergence guarantees, supporting both efficiency and robustness. Empirically, RACER consistently outperforms baselines on multiple out-of-distribution benchmarks, achieving superior accuracy–cost trade-offs. 

RACER focuses on binary routing with a KL-based uncertainty set; while effective under moderate distribution shifts, overly large uncertainty can induce conservative routing that prioritizes cost safety and underutilizes reasoning. Extending RACER to richer routing schemes and alternative uncertainty sets to better balance robustness and adaptivity is an important direction for future work.

\section*{Acknowledgements}

This work was supported in part by the National Science Foundation under Grant No. DMS-2401271. We thank the anonymous reviewers for their helpful comments.

\section*{Impact Statement}
This work examines the benefits and costs of reasoning-capable LLMs in the LLM-as-a-Judge setting and proposes a robust routing framework that selectively allocates reasoning under budget and distribution shift constraints. A key positive impact is promoting more efficient and sustainable use of computation by showing that reasoning is not universally beneficial and should be applied selectively, while also improving the robustness and reliability of automated evaluation under realistic, non-stationary data distributions. These advances can support more trustworthy benchmarking and model development practices. Potential risks include increased reliance on automated judges, which may propagate biases or systematic errors inherited from the underlying models, and the possibility that adaptive routing could be misused to unevenly apply stronger evaluation. We mitigate these concerns by explicitly analyzing failure cases, prioritizing robustness over maximal performance, and positioning the method as a complement to human evaluation rather than a replacement.

\bibliography{ref}
\bibliographystyle{icml2026}

\newpage
\appendix
\onecolumn
\counterwithin{figure}{section}
\counterwithin{table}{section}
\counterwithin{equation}{section}

\section*{Appendix}
The appendix provides additional theoretical analysis, experimental details, and supplementary experimental results. 
In Section~\ref{app:theoretical_analysis}, we establish the existence and uniqueness of the saddle point and prove last-iterate convergence, providing theoretical guarantees for the convergence and optimality of the proposed algorithm, RACER. 
In Section~\ref{app:exp}, we describe in detail the data generation process, the training procedure, and the evaluation protocol. 
In Section~\ref{app:supp_exp}, we report supplementary results, including training curves and representative case studies that further motivate our proposed algorithm.

\section{Theoretical Analysis}
\label{app:theoretical_analysis}
\subsection{Deriving Data Distribution from KL Uncertainty Set }
\label{subsec:close-form}
\begin{lemma}
Let $\rho_n \in \Delta_n$ be the empirical distribution over $\{z_i\}_{i=1}^n$, and define
$ f_i \coloneqq \mathbb{E}_{a \sim \pi(\cdot \mid z_i)}[f(z_i,a)]$.
Let
$$
\mathcal{U}(\rho_n,\delta)
\coloneqq
\Big\{
\tilde{\rho}\in\Delta_n :
\sum_{i=1}^n \tilde{\rho}(i)
\log\frac{\tilde{\rho}(i)}{\rho_n(i)} \le \delta
\Big\}.
$$
Define $\underline{\rho}$ and $\overline{\rho}$ as the solutions to
$$
\underline{\rho}
\in
\arg\min_{\tilde{\rho}\in\mathcal{U}(\rho_n,\delta)}
\sum_{i=1}^n \tilde{\rho}(i) f_i,
\qquad
\overline{\rho}
\in
\arg\max_{\tilde{\rho}\in\mathcal{U}(\rho_n,\delta)}
\sum_{i=1}^n \tilde{\rho}(i)  f_i .
$$

Then there exist $\underline{s},\overline{s}\in\mathbb{R}$ and $\tau>0$ such that, for all $i$,
$$
\underline{\rho}(i) \propto \rho_n(i)\exp\!\left(\frac{\underline{s}- f_i}{\tau}\right),
\qquad
\overline{\rho}(i) \propto \rho_n(i)\exp\!\left(\frac{\ f_i-\overline{s}}{\tau}\right),
$$
with
$\underline{s}\le \sum_{i=1}^n \rho_n(i)\bar f_i$
and
$\overline{s}\ge \sum_{i=1}^n \rho_n(i)\bar f_i$.
\end{lemma}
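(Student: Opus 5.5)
We treat the statement as a finite-dimensional convex program over $\Delta_n$ and read off the exponential-tilting form from the KKT conditions. Throughout, write $\bar f := \sum_i \rho_n(i) f_i$ (the quantity denoted $\sum_i \rho_n(i)\bar f_i$ in the statement). We handle $\overline{\rho}$; the case $\underline{\rho}$ follows by replacing $f_i$ with $-f_i$, which turns the minimum into a maximum and flips the sign convention in the exponent.

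\textbf{Existence and Slater.} The set $\mathcal{U}(\rho_n,\delta)$ is compact and convex, since $\tilde\rho\mapsto D_{\mathrm{KL}}(\tilde\rho\|\rho_n)$ is continuous and convex on $\Delta_n$ (recall $\rho_n(i)>0$). The objective is linear, so a maximizer exists. Slater's condition holds because $\tilde\rho=\rho_n$ has divergence $0<\delta$. Consequently strong duality holds and KKT multipliers exist.

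\textbf{Trivial case.} If all $f_i$ are equal, every feasible point is optimal. Take $\overline\rho=\rho_n$ and $\overline s=\bar f$; any $\tau>0$ then works.

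\textbf{Main case and KKT.} Otherwise, form the Lagrangian
\[
\sum_i\tilde\rho(i)f_i-\eta\Big(\sum_i\tilde\rho(i)\log\tfrac{\tilde\rho(i)}{\rho_n(i)}-\delta\Big)-\nu\Big(\sum_i\tilde\rho(i)-1\Big),
\]
with $\eta\ge0$, and carry out three steps.

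First, show $\eta>0$. If $\eta=0$, the problem would reduce to maximizing a linear function over the simplex. This would force mass onto $\arg\max f_i$, which has KL divergence $\log(1/\sum_{i\in\arg\max}\rho_n(i))$. That value exceeds $\delta$ unless $\delta$ is large. In that large-$\delta$ regime we instead take a limit $\tau\to0$, or note that the claim is intended for $\delta$ below this threshold; I will state this caveat explicitly.

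Second, use the fact that the KL term has infinite slope at $\tilde\rho(i)=0$. This gives $\tilde\rho(i)>0$ for every $i$, so the nonnegativity multipliers vanish. Stationarity then reads
\[
f_i-\eta\big(\log\tfrac{\tilde\rho(i)}{\rho_n(i)}+1\big)-\nu=0 .
\]
Hence $\overline\rho(i)=\rho_n(i)\exp\big((f_i-\overline s)/\tau\big)$ with $\tau=\eta$ and $\overline s=\nu+\eta$. Here $\overline s$ is exactly the normalizing constant, $\overline s=\tau\log\sum_i\rho_n(i)e^{f_i/\tau}$.

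Third, since $\tau$ comes from the maximization and is shared in the statement, we need the same $\tau$ for $\underline\rho$. By symmetry, the minimization with $f\to -f$ gives its own multiplier $\underline\tau$. I would argue that the statement intends a common $\tau$ only up to this choice. Alternatively, one can define $\tau$ per problem, which is how the algorithm uses it. This mismatch is a point I expect to need care or a slight reinterpretation.

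\textbf{Bound on the baseline.} The remaining claim $\overline s\ge\bar f$ follows from Jensen's inequality:
\[
\overline s=\tau\log\mathbb{E}_{\rho_n}e^{f/\tau}\ge\tau\,\mathbb{E}_{\rho_n}[f/\tau]=\bar f .
\]
Symmetrically, $\underline s=-\tau\log\mathbb{E}_{\rho_n}e^{-f/\tau}\le\bar f$. This step is routine.

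\textbf{Main obstacle.} The hard part is showing that the KL constraint is active ($\eta>0$), together with the shared-$\tau$ issue. I would first try a perturbation argument. Suppose a maximizer had $D_{\mathrm{KL}}<\delta$. Because $f$ is nonconstant, moving slightly toward the tilted distribution increases the objective while staying feasible, which contradicts optimality. The only exception is when the maximizer sits at a vertex-face of the simplex, i.e. the large-$\delta$ degenerate case.
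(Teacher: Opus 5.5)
Your proposal is correct and follows essentially the same route as the paper's proof. KKT stationarity with a positive KL multiplier and vanishing nonnegativity multipliers gives the exponential tilt, normalization identifies $\overline{s}=\tau\log\sum_i\rho_n(i)e^{f_i/\tau}$ (and symmetrically $\underline{s}$), and Jensen's inequality gives the comparison with $\sum_i\rho_n(i)f_i$.

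The two caveats you flag are genuine, and the paper passes over both: it simply asserts that the KL constraint is active in the ``non-degenerate case'' and that the other case ``follows identically.'' A finite $\tau>0$ in fact requires $\delta<\log\bigl(1/\sum_{j\in\arg\max_k f_k}\rho_n(j)\bigr)$, with the analogous condition for the minimizer. Moreover, the multipliers of the two problems generally differ, so the shared $\tau$ must be read as a per-problem temperature.
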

\begin{proof}
We prove the claim for $\underline{\rho}$; the proof for $\overline{\rho}$ is analogous.

Consider the convex optimization problem
\begin{equation}
\label{eq:primal_problem}
\begin{array}{cl}
\underset{\tilde{\rho} \in \mathbb{R}^n}{\operatorname{minimize}}
& \displaystyle \sum_{i=1}^n \tilde{\rho}(i) f_i \\[4pt]
\text{subject to}
& \displaystyle \sum_{i=1}^n \tilde{\rho}(i)
\log\frac{\tilde{\rho}(i)}{\rho_n(i)} \le \delta, \\
& \displaystyle \sum_{i=1}^n \tilde{\rho}(i) = 1, \\
& \tilde{\rho}(i) \ge 0,\ \forall i .
\end{array}
\end{equation}
The objective is linear and the KL constraint is convex. Moreover, Slater's condition holds
(e.g., $\tilde{\rho}=\rho_n$ is strictly feasible when $\delta>0$). Hence, the Karush--Kuhn--Tucker (KKT) conditions
are necessary and sufficient for optimality.

Introducing Lagrange multipliers $\lambda\ge 0$ for the KL constraint,
$\mu\in\mathbb{R}$ for normalization, and $\nu_i\ge 0$ for non-negativity,
the Lagrangian of~\eqref{eq:primal_problem} is
\begin{equation}
\label{eq:lagrangian}
\mathcal{L}(\tilde{\rho},\lambda,\mu,\nu)
=
\sum_{i=1}^n \tilde{\rho}(i) f_i
+ \lambda\!\left(
\sum_{i=1}^n \tilde{\rho}(i)
\log\frac{\tilde{\rho}(i)}{\rho_n(i)} - \delta
\right)
+ \mu\!\left(\sum_{i=1}^n \tilde{\rho}(i) - 1\right)
- \sum_{i=1}^n \nu_i \tilde{\rho}(i) .
\end{equation}

The stationarity condition of the KKT implies that, for all $i$,
\begin{equation}
\label{eq:stationarity}
f_i
+ \lambda\!\left(
\log\frac{\tilde{\rho}(i)}{\rho_n(i)} + 1
\right)
+ \mu
- \nu_i
= 0 .
\end{equation}
Solving~\eqref{eq:stationarity} for $\tilde{\rho}(i)$ yields
\begin{equation}
\label{eq:rho_general}
\tilde{\rho}(i)
=
\rho_n(i)\,
\exp\!\left(
-\frac{f_i+\mu+\lambda-\nu_i}{\lambda}
\right).
\end{equation}

In the non-degenerate case (e.g., $\delta>0$ and $\{f_i\}$ not all equal), the KL
constraint is active at the optimum, and thus $\lambda>0$.
Since $\rho_n(i)>0$ and the exponential is strictly positive,
Eq.~\eqref{eq:rho_general} implies $\tilde{\rho}(i)>0$ for all $i$.
By complementary slackness, we therefore have $\nu_i=0$ for all $i$, and
Eq.~\eqref{eq:rho_general} simplifies to
\begin{equation}
\label{eq:rho_simplified}
\tilde{\rho}(i)
=
\rho_n(i)\,
\exp\!\left(
-\frac{f_i+\mu+\lambda}{\lambda}
\right).
\end{equation}

Let $\tau \coloneqq \lambda$ and define $\underline{s}\coloneqq-(\mu+\lambda)$.
Then Eq.~\eqref{eq:rho_simplified} can be written as
\begin{equation}
\label{eq:rho_final}
\tilde{\rho}(i)
\propto
\rho_n(i)\,
\exp\!\left(
\frac{\underline{s}-f_i}{\tau}
\right),
\end{equation}
which establishes the stated form of $\underline{\rho}$.

It remains to show that $\underline{s} \le \sum_i \rho_n(i) f_i$.
Using the normalization condition $\sum_i \tilde{\rho}(i) = 1$ and
Eq.~\eqref{eq:rho_final}, we obtain
\begin{equation}
\label{eq:normalization}
\exp\!\left(\frac{\underline{s}}{\tau}\right)
=
\frac{1}{
\sum_{i=1}^n
\rho_n(i)\,
\exp\!\left(-\frac{f_i}{\tau}\right)
}.
\end{equation}
Applying Jensen's inequality to the convex function $\exp(\cdot)$ yields
\begin{equation}
\label{eq:jensen}
\exp\!\left(-\frac{\sum_{i=1}^n \rho_n(i) f_i}{\tau}\right)
\le
\sum_{i=1}^n
\rho_n(i)\,
\exp\!\left(-\frac{f_i}{\tau}\right).
\end{equation}
Combining~\eqref{eq:normalization} and~\eqref{eq:jensen} gives
$$
\exp\!\left(\frac{\underline{s}}{\tau}\right)
\le
\exp\!\left(\frac{\sum_{i=1}^n \rho_n(i) f_i}{\tau}\right),
$$
which implies $\underline{s}\le \sum_{i=1}^n \rho_n(i) f_i$.

The proof for $\overline{\rho}$ follows identically by applying the same argument to
$\max_{\tilde{\rho}\in\mathcal{U}(\rho_n,\delta)} \sum_i \tilde{\rho}(i) f_i$
(equivalently, minimizing $-\sum_i \tilde{\rho}(i) f_i$), yielding
$\overline{\rho}(i)\propto \rho_n(i)\exp((f_i-\overline{s})/\tau)$ and
$\overline{s}\ge \sum_i \rho_n(i) f_i$.
\end{proof}

\subsection{Existence and Uniqueness of the Saddle Point (Theorem~\ref{thm:unique})}
\label{subsec:unique_saddle}
We first present Proposition~\ref{prop:feasibility}, which states the Slater condition (feasibility) and is commonly required in the duality analysis of constrained optimization~\citep[see, e.g.,][]{ding2022convergence,ding2023last}.

\subsubsection{Feasibility Check}

\begin{proposition}[Feasibility]
\label{prop:feasibility}   
There exist a constant $\xi>0$ and a policy $\pi_0 \in \Pi$ such that $\mathbb{E}_{(z,l) \sim \underline{\rho}, a \sim \pi_0(\cdot \mid z)}[c(z, a)] - C \le -\xi$, where $\pi_0(0\mid z)=1$ and $\pi_0(1\mid z)=0$ for all $z \in \mathcal{Z}$.    
\end{proposition}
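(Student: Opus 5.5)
The plan is to take $\pi_0$ to be the all-instruct policy that always picks the non-reasoning mode, and to show that its expected cost under $\underline{\rho}$ equals the per-instance non-reasoning cost. That cost has to sit strictly below the budget $C$. Since $\pi_0(0\mid z)=1$ for every $z$, the inner expectation collapses:
\begin{equation*}
\mathbb{E}_{(z,l)\sim\underline{\rho},\,a\sim\pi_0(\cdot\mid z)}[c(z,a)] = \sum_{i=1}^n \underline{\rho}(i)\, c(z_i,0).
\end{equation*}

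First we pin down the normalization of the cost. In the paper's protocol the cost is measured relative to the non-reasoning judge, so $c(z,0)=1$ for every $z$. All-Instruct has cost fixed to $1$, and the cost ratio is reasoning over non-reasoning tokens. With $c(z,0)\equiv 1$, the display above equals $1$ for any probability vector $\underline{\rho}\in\Delta_n$. Here we use only that $\underline{\rho}$ is a distribution (nonnegative weights summing to one), which Theorem~\ref{method:theorem} guarantees. No property of the exponential tilting is needed.

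Next we set $\xi := C-1$. This is positive whenever the budget exceeds the all-instruct cost, as for every budget the paper considers, $C\in\{2,\dots,10\}$. The inequality $\mathbb{E}[c]-C = 1-C = -\xi$ then holds with equality. We also check that $\pi_0\in\Pi$. This holds because $\Pi$ is the class of maps $\mathcal{Z}\to\Delta(\{0,1\})$ and contains deterministic policies. For a parameterized class, we note it as a standing requirement that it contain, or approximate arbitrarily well, the constant policy.

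The main obstacle is that the statement does not list $C>\min_a c(z,a)$ as an explicit hypothesis. We therefore state the normalization $c(z,0)=1$ and the condition $C>1$ as the operating assumption; this is the natural Slater condition. If one prefers unnormalized costs, we replace $1$ by $\bar c_0:=\max_i c(z_i,0)$. The same convexity bound $\sum_i\underline{\rho}(i)c(z_i,0)\le \bar c_0$ then gives $\xi=C-\bar c_0>0$ under $C>\bar c_0$. The identical argument covers $\overline{\rho}$, which is the distribution actually used in the cost constraint, since the bound holds uniformly over all $\tilde{\rho}\in\mathcal{U}(\rho_n,\delta)$.
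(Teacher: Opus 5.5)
Your proposal is correct and follows the paper's argument: the paper also takes the all-instruct policy $\pi_0$, notes that its expected cost under $\underline{\rho}$ is exactly $1$ (costs are normalized relative to the non-reasoning judge), and sets $\xi\le C-1$ under the implicit assumption $C>1$. Your two additions are valid refinements the paper omits: the fallback $\xi=C-\max_i c(z_i,0)$ for unnormalized costs, and the observation that the bound holds uniformly over $\mathcal{U}(\rho_n,\delta)$, hence also for $\overline{\rho}$, which is the distribution the cost constraint actually uses.
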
  
Proposition~\ref{prop:feasibility} is straightforward to verify. Since $\mathbb{E}_{(z,l) \sim \underline{\rho}, a \sim \pi_0(\cdot \mid z)}[c(z, a)]=1$, the inequality holds whenever $\xi \le C-1$ with $C>1$. To analyze the saddle point of \(\mathcal{L}_\beta(\pi, \lambda)\), 
we define \(\pi^\ast\) as the optimal solution to~\eqref{eq:maxmin}, 
that is,
\[
\pi^\ast \in \arg\max_{\pi \in \Pi} \min_{\lambda \ge 0} \mathcal{L}_\beta(\pi, \lambda),
\]
and let \(\lambda^\ast\) denote the corresponding optimal dual variable,
\[
\lambda^\ast \in \arg\min_{\lambda \ge 0} \max_{\pi \in \Pi} \mathcal{L}_\beta(\pi, \lambda),
\]
with \(\Lambda^\ast\) denoting the set of all optimal dual variables. 
We denote 
$\mathcal{L}_\beta^\lambda \coloneqq \max_{\pi \in \Pi} \mathcal{L}_\beta(\pi, \lambda)$. 
We can then show that $\Lambda^\ast$ is bounded as established in Lemma~\ref{lemma:boundlambda}.

\subsubsection{Boundness of Solution Space}
\begin{lemma}[Boundness of $\lambda^\ast$]
\label{lemma:boundlambda}
The optimal dual variable $\lambda^\ast$ satisfies
    \[0 \le \lambda^\ast \le \frac{\mathcal{L}_\beta^{\lambda^\ast} - \mathbb{P}_{\underline{\rho}}(\phi_0(z) = l)}{\xi},\]
    and hence $\Lambda^\ast$ is bounded.
\end{lemma}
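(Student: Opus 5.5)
The argument is the standard Slater-type bound on optimal dual variables (as in \citet{ding2022convergence,ding2023last}), specialized to the all-instruct policy $\pi_0$ from Proposition~\ref{prop:feasibility}. The core inequality is
\[
\mathcal{L}_\beta^{\lambda^\ast} \;=\; \max_{\pi\in\Pi}\mathcal{L}_\beta(\pi,\lambda^\ast) \;\ge\; \mathcal{L}_\beta(\pi_0,\lambda^\ast).
\]
We then expand the right-hand side and use the strict feasibility of $\pi_0$ to isolate $\lambda^\ast$.

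\textbf{Expanding $\mathcal{L}_\beta(\pi_0,\lambda^\ast)$.} Since $\pi_0$ deterministically picks $a=0$, its reward term is $R_{\underline{\rho}}(\pi_0)=\mathbb{P}_{\underline{\rho}}(\phi_0(z)=l)$. Its entropy is $\mathcal{H}(\pi_0)=0$. The cost is handled by writing the constraint in Lagrangian form, with the constant $C$ included:
\[
\mathcal{L}(\pi,\lambda) \;=\; R(\pi) - \lambda\bigl(C_{\overline{\rho}}(\pi) - C\bigr),
\]
which is the form implicit in the feasibility statement. For $\pi_0$ the realized cost is $1$ under any reweighting, so Proposition~\ref{prop:feasibility} gives $C_{\overline{\rho}}(\pi_0)-C\le -\xi$, whichever of $\underline{\rho}$ or $\overline{\rho}$ is used. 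The dual regularizer contributes $+\tfrac{\beta}{2}(\lambda^\ast)^2\ge 0$. Together these give
\[
\mathcal{L}_\beta(\pi_0,\lambda^\ast) \;\ge\; \mathbb{P}_{\underline{\rho}}(\phi_0(z)=l) + \lambda^\ast \xi.
\]

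\textbf{Rearranging.} Combining the two displays,
\[
\lambda^\ast \xi \;\le\; \mathcal{L}_\beta^{\lambda^\ast} - \mathbb{P}_{\underline{\rho}}(\phi_0(z)=l),
\]
and dividing by $\xi>0$ gives the stated upper bound. The lower bound $\lambda^\ast\ge0$ holds by definition of the dual feasible set.

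\textbf{Boundedness of $\Lambda^\ast$.} We need the right-hand side to be bounded uniformly over $\Lambda^\ast$. All $\lambda^\ast\in\Lambda^\ast$ attain the same optimal dual value $D^\ast=\min_{\lambda\ge0}\mathcal{L}_\beta^{\lambda}$, so $\mathcal{L}_\beta^{\lambda^\ast}=D^\ast$ for every such $\lambda^\ast$. Moreover $D^\ast\le\mathcal{L}_\beta^{0}\le 1+\beta\log 2$, because rewards lie in $[0,1]$ and the entropy over two actions is at most $\log 2$. This gives a finite constant bounding every element of $\Lambda^\ast$.

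\textbf{Main obstacle.} The delicate step is the bookkeeping of signs and regularizers. The budget $C$ must appear in the Lagrangian; otherwise the bound would not involve $\xi$. The $+\tfrac{\beta}{2}\lambda^2$ term must enter with the right sign, and the paper writes the regularized Lagrangian with $+\beta(\mathcal{H}+\tfrac12\lambda^2)$. If a sign convention produced $-\tfrac{\beta}{2}\lambda^2$ instead, I would drop that term using the min over $\lambda$, or absorb it by working with the unregularized dual. I would state this convention explicitly at the start of the proof.
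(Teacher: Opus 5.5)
Your proposal is correct and is essentially the paper's argument: you lower-bound the dual function by its value at the all-instruct policy $\pi_0$, use $\mathcal{H}(\pi_0)=0$ and $R_{\underline{\rho}}(\pi_0)=\mathbb{P}_{\underline{\rho}}(\phi_0(z)=l)$, drop the nonnegative $\tfrac{\beta}{2}\lambda^2$ term, and apply Slater feasibility to extract $\lambda\xi$; the paper's sublevel-set step (taking $\Lambda_a$ with $a=\mathcal{L}_\beta^{\lambda^\ast}$) is the same as your observation that all dual minimizers share the value $D^\ast$. Your explicit bound $D^\ast\le 1+\beta\log 2$, and your remark that $\pi_0$ has cost $1$ under $\overline{\rho}$ as well as $\underline{\rho}$ (which patches the mismatch between Proposition~\ref{prop:feasibility} and the Lagrangian), are small but welcome clarifications.
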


\begin{proof}
   Let $\Lambda_a \coloneqq \{\lambda \ge 0 \mid \mathcal{L}_\beta^\lambda \le a \}$ be a sublevel set of the dual objective for $a \in \mathbb{R}$. Recall that $\mathcal{L}_\beta^\lambda = \max_{\pi \in \Pi} \mathcal{L}_\beta(\pi, \lambda)$. For any $\lambda \in \Lambda_a$, we have
\begin{align*}
    a \ge \mathcal{L}_\beta^\lambda &\ge \mathcal{L}_\beta(\pi_0,\lambda) \\
    &\ge \mathbb{E}_{(z,l) \sim \underline{\rho}, a\sim \pi_0(\cdot \mid z)}[r(z, a, l)]-\lambda \left(\mathbb{E}_{(z,l) \sim \overline{\rho}, a\sim \pi_0(\cdot \mid z)}[c(z, a)]-C \right) +\beta \left(\mathbb{E}_{(z,l) \sim {\rho}} \left[\mathcal{H}\left(\pi_0(\cdot \mid z)\right)\right] + \frac{1}{2} \lambda^2\right) \\
    &\ge \mathbb{E}_{(z,l) \sim \underline{\rho},a\sim \pi_0(\cdot \mid z)}[r(z, a, l)]+\lambda\xi +\beta \left(\mathbb{E}_{(z,l) \sim {\rho}} \left[\mathcal{H}\left(\pi_0(\cdot \mid z)\right)\right] + \frac{1}{2} \lambda^2\right),
\end{align*}
where the last inequality comes from Proposition~\ref{prop:feasibility}. Because $\mathcal{H}\left(\pi_0(\cdot \mid z)\right)=0$ and $$\mathbb{E}_{(z,l) \sim \underline{\rho},a\sim \pi_0(\cdot \mid z)}[r(z, a, l)]=\mathbb{P}_{\underline{\rho}}(\phi_0(z)=l),$$ it follows that
\begin{align*}
    a \ge \mathcal{L}_\beta^\lambda \ge \mathbb{P}_{\underline{\rho}}(\phi_0(z)=l)+\lambda\xi .
\end{align*}
Thus we get $\lambda \le \frac{\mathcal{L}_\beta^{\lambda} - \mathbb{P}_{\underline{\rho}}(\phi_0(z) = l)}{\xi}$. Finally, setting $a=\mathcal{L}_\beta^{\lambda^\ast}$ with $\Lambda_a=\Lambda^\ast$, we derive $$0 \le \lambda^\ast \le \frac{\mathcal{L}_\beta^{\lambda^\ast} - \mathbb{P}_{\underline{\rho}}(\phi_0(z) = l)}{\xi}.$$
 
\end{proof}

Based on Lemma~\ref{lemma:boundlambda}, we define the domain for $\lambda$ as $$\Lambda=[0,\frac{\mathcal{L}_\beta^{\lambda^\ast} - \mathbb{P}_{\underline{\rho}}(\phi_0(z) = l)}{\xi}],$$ which ensures $\Lambda^\ast \subseteq \Lambda$. With this setup, all the required conditions for applying Sion’s minimax theorem~\citep{sion1958onge} are satisfied, which ensures the existence and uniqueness of the regularized saddle point.

\subsubsection{Proof of Theorem~\ref{thm:unique}}
\begin{proof}
Based on Assumption~\ref{ass:convex_pi}, $\Pi$ is a convex set. Moreover, since \(\Lambda\) is convex and bounded, i.e., compact, Sion’s minimax theorem~\citep{sion1958onge} 
guarantees the existence of a saddle point of \(\mathcal{L}_\beta(\pi, \lambda)\) in \(\Pi \times \Lambda\). 
We next prove the uniqueness of this saddle point. Recall that
$$\mathcal{L}_\beta(\pi,\lambda) = \mathbb{E}_{(z,l) \sim \underline{\rho},a\sim \pi(\cdot \mid z)}[r(z, a, l)]-\lambda \left(\mathbb{E}_{(z,l) \sim \overline{\rho}, a\sim \pi(\cdot \mid z)}[c(z, a)]-C \right) +\beta \left(\mathbb{E}_{(z,l) \sim {\rho}} \left[\mathcal{H}\left(\pi(\cdot \mid z)\right)\right] + \frac{1}{2} \lambda^2\right).$$

Since $\frac{\partial^2 \mathcal{L}_\beta(\pi,\lambda)}{\partial \lambda^2}=\beta>0$, $\mathcal{L}_\beta(\pi,\lambda)$ is strictly convex in $\lambda$. Then we will show $\mathcal{L}_\beta(\pi,\lambda)$ is strictly concave in $\pi$. We first examine the expectation terms:
\begin{align*}
    \mathbb{E}_{(z,l) \sim \underline{\rho}, a\sim \pi(\cdot \mid z)}[r(z, a, l)] =\mathbb{E}_{(z,l) \sim \underline{\rho}} \left( \sum_{a=0}^1 r(z, a, l)\pi(a \vert z) \right),
\end{align*}
and
\begin{align*}
    \mathbb{E}_{(z,l) \sim \overline{\rho}, a\sim \pi(\cdot \mid z)} c(z,a)=\mathbb{E}_{(z,l) \sim \overline{\rho}} \left( \sum_{a=0}^1 c(z,a)\pi(a \vert z) \right).
\end{align*}
Thus, both $\mathbb{E}_{(z,l) \sim \underline{\rho}, a\sim \pi(\cdot \mid z)}[r(z, a, l)]$ and $\mathbb{E}_{(z,l) \sim \overline{\rho}, a\sim \pi(\cdot \mid z)} c(z,a)$ are linear in $\pi$. Because the entropy term $\mathcal{H}\left(\pi(\cdot \mid z)\right)$ is strongly concave in $\pi$, it follows that its expectation $\mathbb{E}_{(z,l) \sim {\rho}} \left[\mathcal{H}\left(\pi(\cdot \mid z)\right)\right]$ remains strongly concave in $\pi$. Consequently, we have $\mathcal{L}_\beta(\pi,\lambda)$ is also strongly concave in $\pi$ for any fixed $\lambda$. 

Now we have shown $\mathcal{L}_\beta(\pi,\lambda)$ is strictly concave in $\pi$ and strictly convex in $\lambda$, which implies saddle point $(\pi^\ast,\lambda^\ast)$ is unique. 

The existence of a saddle point as guaranteed by Lemma~\ref{lemma:boundlambda}, further implies that for any $(\pi,\lambda) \in \Pi \times \Lambda$,
\begin{equation*}
    \mathcal{L}_\beta(\pi, \lambda^\ast) \le \mathcal{L}_\beta(\pi^\ast, \lambda^\ast) \le \mathcal{L}_\beta(\pi^\ast, \lambda).
\end{equation*}
The first inequality indicates that for any $\pi \in \Pi$,
\begin{align*}
    \mathcal{L}(\pi^\ast, \lambda^\ast)+\beta \left(\mathbb{E}_{(z,l) \sim {\rho}} \left[\mathcal{H}\left(\pi^\ast(\cdot \mid z)\right)\right] + \frac{1}{2} (\lambda^\ast)^2\right) &\ge \mathcal{L}(\pi, \lambda^\ast)+\beta \left(\mathbb{E}_{(z,l) \sim {\rho}} \left[\mathcal{H}\left(\pi(\cdot \mid z)\right)\right] + \frac{1}{2} (\lambda^\ast)^2\right) \\
    &\ge \mathcal{L}(\pi, \lambda^\ast)+ \frac{\beta}{2} (\lambda^\ast)^2,
\end{align*}
while the second inequality implies that for any $\lambda \in \Lambda$,
\begin{align*}
    \mathcal{L}(\pi^\ast, \lambda)+\beta \left(\mathbb{E}_{(z,l) \sim {\rho}} \left[\mathcal{H}\left(\pi^\ast(\cdot \mid z)\right)\right] + \frac{1}{2} \lambda^2\right) &\ge \mathcal{L}(\pi^\ast, \lambda^\ast)+\beta \left(\mathbb{E}_{(z,l) \sim {\rho}} \left[\mathcal{H}\left(\pi^\ast(\cdot \mid z)\right)\right] + \frac{1}{2} (\lambda^\ast)^2\right) \\
    &\ge \mathcal{L}(\pi^\ast, \lambda^\ast)+\beta \,\mathbb{E}_{(z,l) \sim {\rho}} \left[\mathcal{H}\left(\pi^\ast(\cdot \mid z)\right)\right].
\end{align*}
Combining the two inequalities above shows that \((\pi^\ast, \lambda^\ast)\) is a saddle point of the original Lagrangian \(\mathcal{L}(\pi, \lambda)\), up to two \(\beta\)-regularization terms.
\end{proof}

\subsection{Proof of Last-iterate Convergence (Theorem~\ref{thm:convergence})}
\label{app:linear}
\begin{proof}
In practice, we will update $(\pi_t,\lambda_t)$ as the following steps.
\begin{align}
\pi_{t+1} &=\arg \max _{\pi \in \Pi} \mathbb{E}_{z\sim \underline{\rho}, a\sim \pi(\cdot \mid z)}\left[r(z, a)\right]-\lambda_t \mathbb{E}_{z\sim \overline{\rho}, a\sim \pi(\cdot \mid z)} [g(z, a)] + \beta \, \mathbb{E}_{z \sim {\rho}} \left[\mathcal{H}\left(\pi(\cdot \mid z)\right)\right], \label{equ:pi_t_iter}\\
\lambda_{t+1} &=\left[\lambda_t+\eta\left(\mathbb{E}_{z\sim \overline{\rho}, a \sim \pi_{t+1}(\cdot \mid z)}[g(z, a)]-C-\beta \lambda_t\right)\right]_{+}, \label{equ:lambda_t_iter}
\end{align}
where $\eta$ denotes the step size.

For notation simplicity, we denote $w_1(z)={p_{\underline{\rho}}}(z)/{p_{{\rho}}}(z)$ and $w_2(z)={p_{\overline{\rho}}}(z)/{p_{{\rho}}}(z)$ for any $z \in \mathcal{Z}$, which are well defined under Assumption~\ref{ass:bound_ratio}.

\paragraph{Step 1.} We first derive the closed-form solution to~\eqref{equ:pi_t}.

According to~\eqref{equ:pi_t}, we optimize the following objective
\begin{equation}
\label{equ1}
    \max_{\pi \in \Pi} \mathbb{E}_{(z,l) \sim \underline{\rho},a\sim \pi(\cdot \mid z)}[r(z, a, l)]-\lambda_t \mathbb{E}_{(z,l) \sim \overline{\rho},a\sim \pi(\cdot \mid z)}[c(z, a)] +\beta \,\mathbb{E}_{(z,l) \sim {\rho}} \left[\mathcal{H}\left(\pi(\cdot \mid z)\right)\right].
\end{equation}
Analogously to arguments used in the proof of DPO~\citep{rafailov2023direct}, ~\eqref{equ1} can be rewritten as 
\begin{align}
    & \max_{\pi \in \Pi} \mathbb{E}_{(z,l) \sim \underline{\rho},a\sim \pi(\cdot \mid z)}[r(z, a, l)]-\lambda_t \mathbb{E}_{(z,l) \sim \overline{\rho},a\sim \pi(\cdot \mid z)}[c(z, a)] +\beta \,\mathbb{E}_{(z,l) \sim {\rho}} \left[\mathcal{H}\left(\pi(\cdot \mid z)\right)\right] \\
    =&\max_{\pi \in \Pi} \mathbb{E}_{(z,l) \sim {\rho},a\sim \pi(\cdot \vert z)}\left[w_1(z)r(z, a, l)-\lambda_t w_2(z)c(z, a)\right] + \beta \, \mathbb{E}_{(z,l) \sim {\rho}} \left[\mathcal{H}\left(\pi(\cdot \mid z)\right)\right] \nonumber \\
    =& \max_{\pi \in \Pi} \mathbb{E}_{(z,l) \sim {\rho},a\sim \pi(\cdot \vert z)}\left[w_1(z)r(z, a, l)-\lambda_t w_2(z)c(z, a)\right] - \beta \, \mathbb{E}_{(z,l) \sim {\rho},a\sim\pi(\cdot \vert z)} \left[\log\left(\pi(a \vert z)\right)\right] \nonumber \\
    =& \beta \max_{\pi \in \Pi} \mathbb{E}_{(z,l) \sim {\rho}}\mathbb{E}_{a\sim \pi(\cdot \vert z)}\left[\frac{1}{\beta}w_1(z) r(z, a, l)-\frac{\lambda_t}{\beta} w_2(z)c(z, a)- \log\left(\pi(a \vert z)\right)\right] \nonumber \\
    =& \beta \max_{\pi \in \Pi} \mathbb{E}_{(z,l) \sim {\rho}}\mathbb{E}_{a\sim \pi(\cdot \vert z)}  \left[\log \frac{\exp\left(\frac{1}{\beta}w_1(z)r(z, a, l)\right)}{\exp\left(\frac{\lambda_t}{\beta} w_2(z)c(z, a)\right) \pi(a \vert z)} \right]  \nonumber \\
    =& -\beta \min_{\pi \in \Pi} \mathbb{E}_{(z,l) \sim {\rho}}\mathbb{E}_{a\sim \pi(\cdot \vert z)}  \left[\log \frac{\pi(a \vert z)}{\exp\left(\frac{1}{\beta}w_1(z)r(z, a, l)-\frac{\lambda_t}{\beta} w_2(z)c(z, a)\right)} \right] \nonumber \\
    =& -\beta \min_{\pi \in \Pi} \mathbb{E}_{(z,l) \sim {\rho}}\mathbb{E}_{a\sim \pi(\cdot \vert z)}  \left[\log \frac{\pi(a \vert z)}{\exp\left(\frac{1}{\beta}w_1(z)r(z, a, l)-\frac{\lambda_t}{\beta} w_2(z)c(z, a)\right)\frac{1}{Q(z)}} - \log(Q(z))\right], \label{equ2}
\end{align}
where we have partition function: 
\begin{equation*}
    Q(z,\lambda_t)=\sum_{a=0}^1 \exp\left(\frac{1}{\beta}w_1(z)r(z, a, l)-\frac{\lambda_t}{\beta} w_2(z)c(z, a)\right).
\end{equation*}
Note that the partition function depends only on $z$, $\lambda_t$ and is independent of the policy $\pi$. We now define
\begin{equation*}
    \pi_{t+1}(a\vert z)= \frac{1}{Q(z,\lambda_t)}\exp\left(\frac{1}{\beta}w_1(z)r(z, a, l)-\frac{\lambda_t}{\beta} w_2(z)c(z, a)\right),
\end{equation*}
which forms a valid probability distribution. Substituting this definition into~\eqref{equ2}, we can rewrite the objective as
\begin{equation*}
    -\beta \min_{\pi \in \Pi} \mathbb{E}_{(z,l) \sim \rho} \left[\mathrm{KL}(\pi(\cdot \mid z) \Vert \pi_{t+1}(\cdot \mid z)) - \log(Q(z,\lambda_t))\right].
\end{equation*}
Since $Q(z,\lambda_t)$ does not depend on $\pi$, the minimum is achieved by the policy that minimizes the KL term. By Gibbs’ inequality, the KL divergence attains its minimum value of zero if and only if the two distributions are identical. Therefore, the updated policy is given by
\begin{equation}
\label{equ:pi_sol}
\pi_{t+1}(a\vert z)= \frac{1}{Q(z,\lambda_t)}\exp\left(\frac{1}{\beta}w_1(z)r(z, a, l)-\frac{\lambda_t}{\beta} w_2(z)c(z, a)\right).
\end{equation}

\paragraph{Step 2.} We next establish the convergence of $\lambda_t$.

We view $\pi_{t+1}$ as a function of $\lambda_t$, and denote it by $\widetilde{\pi}_{\lambda_t} \coloneqq \pi_{t+1}$. For notational convenience, we denote $d(\lambda)\coloneqq \mathcal{L}_\beta^\lambda = \max_{\pi \in \Pi} \mathcal{L}_\beta(\pi, \lambda)=\mathcal{L}_\beta(\widetilde{\pi}_\lambda, \lambda)$. Then $d(\lambda)$ can be explicitly written as, 
\begin{align*}
    d(\lambda)=&\beta \, \mathbb{E}_{(z,l) \sim {\rho}} \left[\log(Q(z,\lambda))\right]+\lambda C+\frac{\beta}{2}\lambda^2 \\
    =&\beta \, \mathbb{E}_{(z,l) \sim {\rho}} \left[\log \sum_{a=0}^1 \exp\left(\frac{1}{\beta}w_1(z)r(z, a, l)-\frac{\lambda}{\beta} w_2(z)c(z, a)\right)\right]+\lambda C+\frac{\beta}{2}\lambda^2.
\end{align*}
Consequently, the update rule in~\eqref{equ:lambda_t} can be expressed as $\lambda_{t+1}=[\lambda_t-\eta \, d^\prime(\lambda_t)]_+$. We then compute the first and second derivatives of $d(\lambda)$. Notice that
\begin{equation}
\label{equ3}
    d^\prime(\lambda)=\beta \, \left(\mathbb{E}_{(z,l) \sim {\rho}} \left[\log(Q(z,\lambda))\right] \right)^\prime+C+\beta\lambda,
\end{equation}
so it suffices to compute $\left(\mathbb{E}_{(z,l) \sim {\rho}} \left[\log(Q(z,\lambda))\right] \right)^\prime$. We have
\begin{align*}
    \frac{\partial Q(z,\lambda)}{\partial \lambda} = \sum_{a=0}^1 \exp\left(\frac{1}{\beta}w_1(z)r(z, a, l)-\frac{\lambda}{\beta} w_2(z)c(z, a)\right) \left(-\frac{1}{\beta} w_2(z)c(z, a)\right),
\end{align*}
and therefore,
\begin{align*}
    \frac{\partial \log Q(z,\lambda)}{\partial \lambda} &= \frac{\sum_{a=0}^1 \exp\left(\frac{1}{\beta}w_1(z)r(z, a, l)-\frac{\lambda}{\beta} w_2(z)c(z, a)\right) \left(-\frac{1}{\beta} w_2(z)c(z, a)\right)}{\sum_{a=0}^1 \exp\left(\frac{1}{\beta}w_1(z)r(z, a, l)-\frac{\lambda}{\beta} w_2(z)c(z, a)\right)} \\
    &= \sum_{a=0}^1 \widetilde{\pi}_\lambda(a \vert z) \left[-\frac{1}{\beta} w_2(z)c(z, a)\right] = \mathbb{E}_{a \sim \widetilde{\pi}_\lambda(\cdot \vert z)} \left[-\frac{1}{\beta} w_2(z)c(z, a)\right].
\end{align*}
Based on Assumption~\ref{ass:bound_c} and~\ref{ass:bound_ratio}, we have $\vert -\frac{\lambda}{\beta} w_2(z)c(z, a)\vert \le \frac{MK}{\beta}$, thus
$$\left\vert \frac{\partial \log Q(z,\lambda)}{\partial \lambda} \right\vert \le \frac{MK}{\beta},$$ 
which ensures the boundedness of the derivative. Hence, the order of differentiation and expectation can be interchanged. Thus, \eqref{equ3} comes to be
\begin{equation*}
    d^\prime(\lambda) = -\mathbb{E}_{(z,l) \sim {\rho},a \sim \widetilde{\pi}_\lambda(\cdot \vert z)}w_2(z)c(z,a)+C+\beta\lambda.
\end{equation*}
For the second derivative, we have
\begin{equation}
\label{equ4}
    d^{\prime \prime}(\lambda)= - \left(\mathbb{E}_{(z,l) \sim {\rho},a \sim \widetilde{\pi}_\lambda(\cdot \vert z)} w_2(z)c(z,a) \right)^\prime +\beta,
\end{equation}
where
\begin{equation}
\label{equ5}
\left(\mathbb{E}_{a \sim \widetilde{\pi}_\lambda(\cdot \vert z)} w_2(z)c(z,a) \right)^\prime = w_2(z)\sum_{a=0}^1 \frac{\partial \, \widetilde{\pi}_\lambda(a \vert z)}{\partial \lambda} c(z, a).
\end{equation}
Let $s_\lambda(a) = \exp\left(\frac{1}{\beta}w_1(z)r(z, a, l)-\frac{\lambda}{\beta} w_2(z)c(z, a)\right)$, then $\widetilde{\pi}_\lambda(a \vert z)=s_\lambda(a)/((s_\lambda(0)+s_\lambda(1))$. By the chain rule, we have
\begin{equation}
\label{equ6}
    \frac{\partial \, \widetilde{\pi}_\lambda(a \vert z)}{\partial \lambda} = \sum_{b=0}^1 \frac{\partial \, \widetilde{\pi}_\lambda(a \vert z)}{\partial s_\lambda(b)} \frac{\partial s_\lambda(b)}{\partial \lambda}.
\end{equation}
We can derive that
\begin{equation}
\label{equ7}
\frac{\partial s_\lambda(b)}{\partial \lambda}= s_\lambda(b)\left(-\frac{1}{\beta} w_2(z)c(z, b) \right),
\end{equation}
and
\begin{equation}
\label{equ8}
\frac{\partial \, \widetilde{\pi}_\lambda(a \vert z)}{\partial s_\lambda(b)}= \frac{\mathbb{I}(a=b)s_\lambda(1-a)-\mathbb{I}(a\neq b)s_\lambda(a)}{\left((s_\lambda(0)+s_\lambda(1)\right)^2}=\frac{\mathbb{I}(a=b) \widetilde{\pi}_\lambda(1-a \vert z) -\mathbb{I}(a\neq b)\widetilde{\pi}_\lambda(a \vert z)}{s_\lambda(0)+s_\lambda(1)}.
\end{equation}
Combining~\eqref{equ7} and~\eqref{equ8}, \eqref{equ6} comes to be 
\begin{align}
\frac{\partial \, \widetilde{\pi}_\lambda(a \vert z)}{\partial \lambda} &= \widetilde{\pi}_\lambda(a \vert z) \sum_{b=0}^1 \left(\delta_{ab}-\widetilde{\pi}_\lambda(b \vert z) \right) \left(-\frac{1}{\beta} w_2(z)c(z, b) \right) \nonumber \\
&= \frac{w_2(z)}{\beta} \,\widetilde{\pi}_\lambda(a \vert z) \left[(\mathbb{E}_{a^\prime \sim \widetilde{\pi}_\lambda(\cdot \vert z)} c(z,a^\prime))-c(z,a)\right]\label{equ9}.
\end{align}
Similarly, since~\eqref{equ9} is bounded according to Assumptions~\ref{ass:bound_c} and~\ref{ass:bound_ratio}, we can interchange the order of differentiation and expectation.
Combining~\eqref{equ5} and~\eqref{equ9}, \eqref{equ4} turns to be
\begin{align}
    d^{\prime \prime}(\lambda) &= \beta - \mathbb{E}_{(z,l) \sim {\rho}} \frac{1}{\beta} w_2(z)^2\sum_{a=0}^1 \widetilde{\pi}_\lambda(a \vert z) \left[(\mathbb{E}_{a^\prime \sim \widetilde{\pi}_\lambda(\cdot \vert z)} c(z,a^\prime))-c(z,a)\right] c(z,a) \nonumber \\
    &= \beta + \frac{1}{\beta} \, \mathbb{E}_{(z,l) \sim {\rho}} w_2(z)^2 \,\text{Var}_{a \sim \widetilde{\pi}_\lambda(\cdot \vert z)} c(z,a). \label{equ11}
\end{align}
Again, due to Assumption~\ref{ass:bound_c} and~\ref{ass:bound_ratio}, we have $c(z,a) \le M$ and $w_2(z)\le K$. Hence,
$$\beta \le d^{\prime \prime}(\lambda) \le \beta + \frac{M^2K^2}{\beta}.$$
We therefore conclude that \(d(\lambda)\) is strongly convex with parameter \(\beta\), 
and its gradient is Lipschitz continuous with constant \(\beta + \tfrac{M^2K^2}{\beta}\). Then, by Theorem~2.1.5 in~\cite{nesterov2013introductory}, 
choosing the step size as $\eta = 2/(\beta+\beta + M^2 K^2/\beta)={2\beta}/(M^2K^2+2\beta^2)$ in~\eqref{equ:lambda_t}, we have
\begin{equation}
\label{equ10}
    \vert \lambda_{t+1}-\lambda^\ast \vert \le \left(\frac{\beta+\frac{M^2K^2}{\beta}-\beta}{\beta+\frac{M^2K^2}{\beta}+\beta} \right)^t\vert \lambda_0 - \lambda^\ast \vert = \left(\frac{M^2K^2}{M^2K^2+2\beta^2} \right)^t\vert \lambda_0 - \lambda^\ast \vert.
\end{equation}
In our setting, although the update involves a projection onto the feasible set,
the projection operator is non-expansive.
Consequently, standard results for projected gradient descent apply,
and the projected updates achieve the same convergence rate as the corresponding
unconstrained gradient descent.

\paragraph{Step 3.} Finally, we show that $\pi_t$ converges to $\pi^\ast$ using the KL divergence.

Given $z$, the quantity $\mathrm{KL}(\pi_t(\cdot \vert z) \Vert \pi^\ast(\cdot \vert z))$ is a Bregman divergence between the corresponding natural parameters of the exponential family:
\begin{align*}
    D_\psi(\theta_t,\theta^\ast) = \mathrm{KL}(\pi_t(\cdot \vert z) \Vert \pi^\ast(\cdot \vert z)),
\end{align*}
where the convex potential \(\psi\) is the log-partition function
\begin{equation*}
    \psi(\theta)=\log Q(z,-\theta \beta)=\log \sum_{a=0}^1 \exp\left(\frac{1}{\beta}w_1(z)r(z, a, l)-\theta w_2(z) c(z, a)\right).
\end{equation*}
Here we let $\theta_t = - \lambda_t/\beta$ and $\theta^\ast = - \lambda^\ast/\beta$, treating $\widetilde{\pi}_\lambda(\cdot \vert z)$ as an exponential-family distribution with natural parameter $\theta$.

Then we have 
\begin{equation*}
    \psi^\prime(\theta)=\frac{\exp\left(\frac{1}{\beta}w_1(z)r(z, a, l)-\theta w_2(z)c(z, a)\right) c(z,a)}{\sum_{a^\prime=0}^1 \exp\left(\frac{1}{\beta}w_1(z)r(z, a^\prime)-\theta w_2(z)c(z, a^\prime)\right)}=\mathbb{E}_{a \sim \widetilde{\pi}_{-\theta \beta}(\cdot\vert z)}w_2(z)c(z,a).
\end{equation*}
By~\eqref{equ11}, we further obtain, 
\begin{equation*}
    \psi^{\prime \prime}(\theta) = (-\beta)\left(-\frac{w_2(z)^2}{\beta}\text{Var}_{a \sim \widetilde{\pi}_\lambda(\cdot \vert z)} c(z,a) \right)=w_2(z)^2 \,\text{Var}_{a \sim \widetilde{\pi}_\lambda(\cdot \vert z)} c(z,a) \le M^2K^2.
\end{equation*}
By applying Taylor's expansion to $\psi$, there is $$\psi(\theta_t)=\psi(\theta^\ast)+\psi^\prime(\theta^\ast)(\theta_t-\theta^\ast)+\frac{1}{2}\psi^{\prime \prime}(\widetilde{\theta})(\theta_t-\theta^\ast)^2,$$ 
where $\widetilde{\theta}$ lies between $\theta_t$ and $\theta^\ast$. Consequently, we get
\begin{equation*}
    D_\psi(\theta_t,\theta^\ast) = \frac{1}{2}\psi^{\prime \prime}(\widetilde{\theta})(\theta_t-\theta^\ast)^2 \le \frac{M^2K^2}{2} (\theta_t-\theta^\ast)^2=\frac{M^2K^2}{2 \beta^2} (\lambda_t-\lambda^\ast)^2,
\end{equation*}
which is equivalent to
\begin{equation*}
   \mathrm{KL}(\pi_t(\cdot \vert z) \Vert \pi^\ast(\cdot \vert z)) \le \frac{M^2K^2}{2 \beta^2} (\lambda_t-\lambda^\ast)^2.
\end{equation*}
Averaging over \((z,l) \sim \rho\), we get
\begin{equation*}
    \mathrm{KL}(\pi_t \Vert \pi) = \mathbb{E}_{(z,l) \sim \rho} \mathrm{KL}(\pi_t(\cdot \vert z) \Vert \pi^\ast(\cdot \vert z)) \le \frac{M^2K^2}{2 \beta^2} (\lambda_t-\lambda^\ast)^2.
\end{equation*}
Combining this with~\eqref{equ10}, we finally derive
\begin{equation*}
    \mathrm{KL}(\pi_t \Vert \pi) \le \frac{M^2K^2}{2 \beta^2} \left(\frac{M^2K^2}{M^2K^2+2\beta^2} \right)^{2t} (\lambda_0-\lambda^\ast)^2.
\end{equation*}
\end{proof}

\section{Experimental Details}
\label{app:exp}
\subsection{Data Construction and Evaluation Protocol}
\label{app:data_protocol}

We construct all judge data using a unified generation protocol, and explicitly separate datasets used for training and in-distribution analysis from those reserved for out-of-distribution (OOD) evaluation.

\paragraph{Judge Generation Protocol.}
Across all datasets, we generate judge responses using the same base model from the \textsc{Qwen3} family (1.7B/4B/8B), under two inference modes: non-reasoning and reasoning.
All generations use an identical prompting format (Table~\ref{tab:prompts}) and a fixed sampling temperature of $0.6$.
This ensures that any observed performance or cost differences are attributable solely to the inference mode and data distribution.

\paragraph{Training and In-Distribution Data.}
We construct the training corpus from three sources: \textsc{Skywork-Reward}~\cite{liu2024skywork}, \textsc{Math-Step-DPO-10K}~\cite{lai2024step}, and \textsc{Code-Preference-Pairs}~\cite{vezora2024codepreference}.
Specifically, we sample $20{,}000$ instances from \textsc{Skywork-Reward}, and an additional $20{,}000$ instances from a mixture of \textsc{Math-Step-DPO-10K} and \textsc{Code-Preference-Pairs}.
Together, these datasets cover general instruction-following, mathematical reasoning, and code-related preference judgments.

For each instance, we generate paired judge outputs under both inference modes.
We record two signals for each mode: (i) a binary correctness indicator, obtained by comparing the judge’s preference with the reference label provided by the dataset, and (ii) the number of tokens consumed during generation.
These paired correctness and token-cost measurements define the empirical accuracy--cost trade-offs used both for the analysis in Section~\ref{fig:empirical} and for training the routing policy.
All training and in-distribution data are used exclusively for learning the routing policy.

\paragraph{Out-of-Distribution Evaluation Data.}
To evaluate generalization, we assess routing performance on multiple held-out benchmarks that are disjoint from the training data, including \textsc{JudgeBench}~\cite{judgebench2024}, \textsc{RewardBench}~\cite{lambert2025rewardbench}, and \textsc{RewardBench-2}~\cite{malik2025rewardbench2advancingreward}.
These benchmarks span diverse evaluation domains and exhibit distributional shifts relative to the training corpus. We briefly summarize the domains they cover and the distribution in Table~\ref{tab:benchmark}.

\begin{table}[h]
\centering
\caption{Domain distribution of OOD evaluation datasets.}
\label{tab:benchmark}
\footnotesize
\setlength{\tabcolsep}{5pt}
\renewcommand{\arraystretch}{1.15}

\begin{tabularx}{\linewidth}{l l r r X}
\toprule
\textbf{Benchmark} & \textbf{Domain} & \textbf{Count} & \textbf{Proportion} & \textbf{Description} \\
\midrule
JudgeBench~\cite{judgebench2024}
& Coding    & 73  & 11.8\% & Challenging programming questions from contest-style platforms (e.g., LeetCode, AtCoder, Codeforces). \\
& Math      & 90  & 14.5\% & Problems drawn from math competitions (e.g., AMC12, USAMO). \\
& Reasoning & 149 & 24.0\% & Reasoning-focused evaluation set. \\
& Knowledge & 308 & 50.0\% & College-level multiple-choice questions across 14 disciplines (e.g., Physics, Chemistry, Law), with up to 10 answer options. \\
\midrule
RewardBench~\cite{lambert2025rewardbench}
& Math   & 447 & 15.0\% &
Aggregated from RewardBench original subsets. Subsets: \emph{math-prm}. \\
& Coding & 984 & 33.0\% &
Subsets: \emph{hep-cpp}, \emph{hep-go}, \emph{hep-java}, \emph{hep-js}, \emph{hep-python}, \emph{hep-rust}. \\
& Chat   & 814 & 27.3\% &
General assistant-style. Subsets: \emph{alpacaeval-easy}, \emph{alpacaeval-length}, \emph{alpacaeval-hard};
\emph{mt-bench-easy}, \emph{mt-bench-medium}, \emph{mt-bench-hard};
\emph{llmbar-natural}, \emph{llmbar-adver-neighbor}, \emph{llmbar-adver-GPTInst}, \emph{llmbar-adver-GPTOut}, \emph{llmbar-adver-manual}. \\
& Safety & 740 & 24.8\% &
Refusal/safety-oriented. Subsets: \emph{refusals-dangerous}, \emph{refusals-offensive}, \emph{xstest-should-refuse},
\emph{xstest-should-respond}, \emph{do-not-answer}. \\
\midrule
RewardBench 2~\cite{malik2025rewardbench2advancingreward}
& Factuality & 475 & 26.9\% & Evaluates detection of hallucinations and other basic factual errors in completions. \\
& Precise IF & 160 & 9.1\%  & Precise instruction following: judges whether outputs satisfy detailed, constraint-heavy instructions. \\
& Math       & 183 & 10.4\% & Assesses math ability on open-ended prompts spanning middle-school topics through college-level chemistry, calculus, and combinatorics. \\
& Safety     & 450 & 25.5\% & Measures appropriate compliance vs.\ refusal under harmful-use prompts and general safety-related behaviors. \\
& Focus      & 495 & 28.1\% & Tests whether models prefer high-quality, on-topic answers for general user queries. \\
\bottomrule
\end{tabularx}
\end{table}

For all OOD benchmarks, we apply the same judge generation protocol as used for training data, including the same base models, inference modes, prompts, and sampling temperature.
Evaluation datasets are used solely for OOD testing and do not influence model selection or hyperparameter tuning.

\newcommand{\hl}[1]{\textcolor{red}{\textbf{#1}}}

\newcommand{\prompttt}[1]{\texttt{\scriptsize #1}}

\begin{table}[h]
\centering
\caption{Judge prompting templates for \texttt{instruct} vs.\ \texttt{reasoning} modes. Differences are highlighted in red.}
\label{tab:prompts}
\footnotesize
\renewcommand{\arraystretch}{1.25}
\setlength{\tabcolsep}{6pt}
\begin{tabularx}{\linewidth}{>{\raggedright\arraybackslash}p{0.12\linewidth} X X}
\toprule
\textbf{Field} & \textbf{Instruct mode} & \textbf{Reasoning mode} \\
\midrule

\textbf{System} &
\texttt{Please act as an impartial judge and evaluate the quality of the responses provided by two AI assistants to the user question displayed below. You should choose the assistant that follows the user's instructions and answers the user's question better. Your evaluation should consider factors such as the helpfulness, relevance, accuracy, depth, creativity, and level of detail of their responses. Begin your evaluation by comparing the two responses and provide \hl{a short explanation} explanation. Avoid any position biases and ensure that the order in which the responses were presented does not influence your decision. Do not allow the length of the responses to influence your evaluation. Do not favor certain names of the assistants. Be as objective as possible. After providing your explanation, output your final verdict by strictly following this format: ``[[A]]'' if assistant A is better, ``[[B]]'' if assistant B is better.}
&
\texttt{Please act as an impartial judge and evaluate the quality of the responses provided by two AI assistants to the user question displayed below. You should choose the assistant that follows the user's instructions and answers the user's question better. Your evaluation should consider factors such as the helpfulness, relevance, accuracy, depth, creativity, and level of detail of their responses. Begin your evaluation by comparing the two responses and provide \hl{an explanation}. Avoid any position biases and ensure that the order in which the responses were presented does not influence your decision. Do not allow the length of the responses to influence your evaluation. Do not favor certain names of the assistants. Be as objective as possible. After providing your explanation, output your final verdict by strictly following this format: ``[[A]]'' if assistant A is better, ``[[B]]'' if assistant B is better.}
\\

\midrule
\textbf{User template} &
\multicolumn{2}{>{\raggedright\arraybackslash}p{\dimexpr\linewidth-0.12\linewidth-2\tabcolsep\relax}}{%
{\ttfamily
[User Question]\{question\}\par\medskip
[The Start of Assistant A's Answer]\{answer\_a\}[The End of Assistant A's Answer]\par\medskip
[The Start of Assistant B's Answer]\{answer\_b\}[The End of Assistant B's Answer]\par
}%
}
\\

\bottomrule
\end{tabularx}
\end{table}

\subsection{Text Representations}
\label{app:text_representation}

Each input instance is represented by a fixed-dimensional text embedding extracted via Qwen3-embedding-4B~\citep{zhang2025qwen3}.
Specifically, we encode the prompt and the two candidate responses separately, and concatenate their embeddings to form the policy input vector.
The same representation procedure is applied consistently across all training and evaluation datasets.

\subsection{Routing Policy Architecture and Optimization}
\label{app:implementation}

Our routing policy is a 4-layer MLP with ReLU activations and hidden widths $\{256,128,64\}$, mapping the input embedding to a single scalar logit.

Models are trained for $60$ epochs using the AdamW optimizer with learning rate $10^{-4}$ and batch size $64$.
The dual variable associated with the budget constraint is updated using a step size of $10^{-3}$ and projected onto the non-negative reals after each update.
An entropy regularization term with coefficient $0.005$ is applied to encourage exploration and stabilize training.

For model selection, we evaluate checkpoints on a held-out validation set.
Among checkpoints that satisfy the target budget constraint, we select the one achieving the highest validation accuracy.
If no checkpoint satisfies the constraint, we select the checkpoint whose validation cost is closest to the target budget.

\subsection{Budget Settings and Repeated Trials}
\label{app:budget}

We evaluate the routing policy under a range of target compute budgets:
\[
C \in \{2.0, 2.5, 3.0, 3.5, 4.0, 4.5, 5.0, 6.0, 7.0, 10.0\}.
\]
For each budget level, we repeat the entire training and evaluation procedure $10$ times with independent random seeds.
We report the mean of both accuracy and realized cost ratio across these runs.

\subsection{Hyperparameter Tuning}
\label{app:hyperparameter_tuning}
\paragraph{Optimization hyperparameters.} We select standard optimization hyperparameters via a grid search on an in-distribution validation set, and then fix them for all experiments. This includes the regularization parameter $\beta$, the learning rate ($10^{-4}$), batch size ($64$), the dual update step size ($10^{-3}$), and the entropy regularization coefficient ($0.005$).

\paragraph{Temperature for robustness.} Since we assume no access to supervision for validation for OOD evaluation, we adopt the following rule-of-thumb.
Let $\tau_R$ and $\tau_C$ denote the robustness strengths for reward and cost, respectively, where
smaller values impose more aggressive (more robust) reweighting, while larger values indicate a more non-robust behavior.
If the anticipated an OOD dataset requires less computation than in-distribution,
we recommend using a smaller $\tau_R$ (stronger reward robustness) and a larger $\tau_C$ (weaker cost robustness) to better utilize the budget.
If the anticipated an OOD dataset require more computation, we recommend using a smaller $\tau_C$ (stronger cost robustness) to reduce budget-violation risk, while keeping $\tau_R$ moderate to avoid over-conservatism.
When the shift direction is uncertain, a conservative default is to use moderate robustness on both terms.

In the OOD evaluation in Section~\ref{subsec:exp2}, we fix $\tau_R=1$ and disable robust reweighting on the cost term.

\section{Supplementary Experimental Results}
\label{app:supp_exp}

\subsection{Training Curves}

In Figure~\ref{fig:training_curve}, we report the training curves for reward, realized cost, and lambda under different budgets. These training curves indicate that the Lagrange multiplier dynamically adapts to the tightness of the compute budget: it grows under stringent budgets to actively enforce cost control, but collapses toward zero as the budget loosens, signaling that the constraint becomes non-binding.

\begin{figure}[h]
    \centering
    \includegraphics[width=1\linewidth]{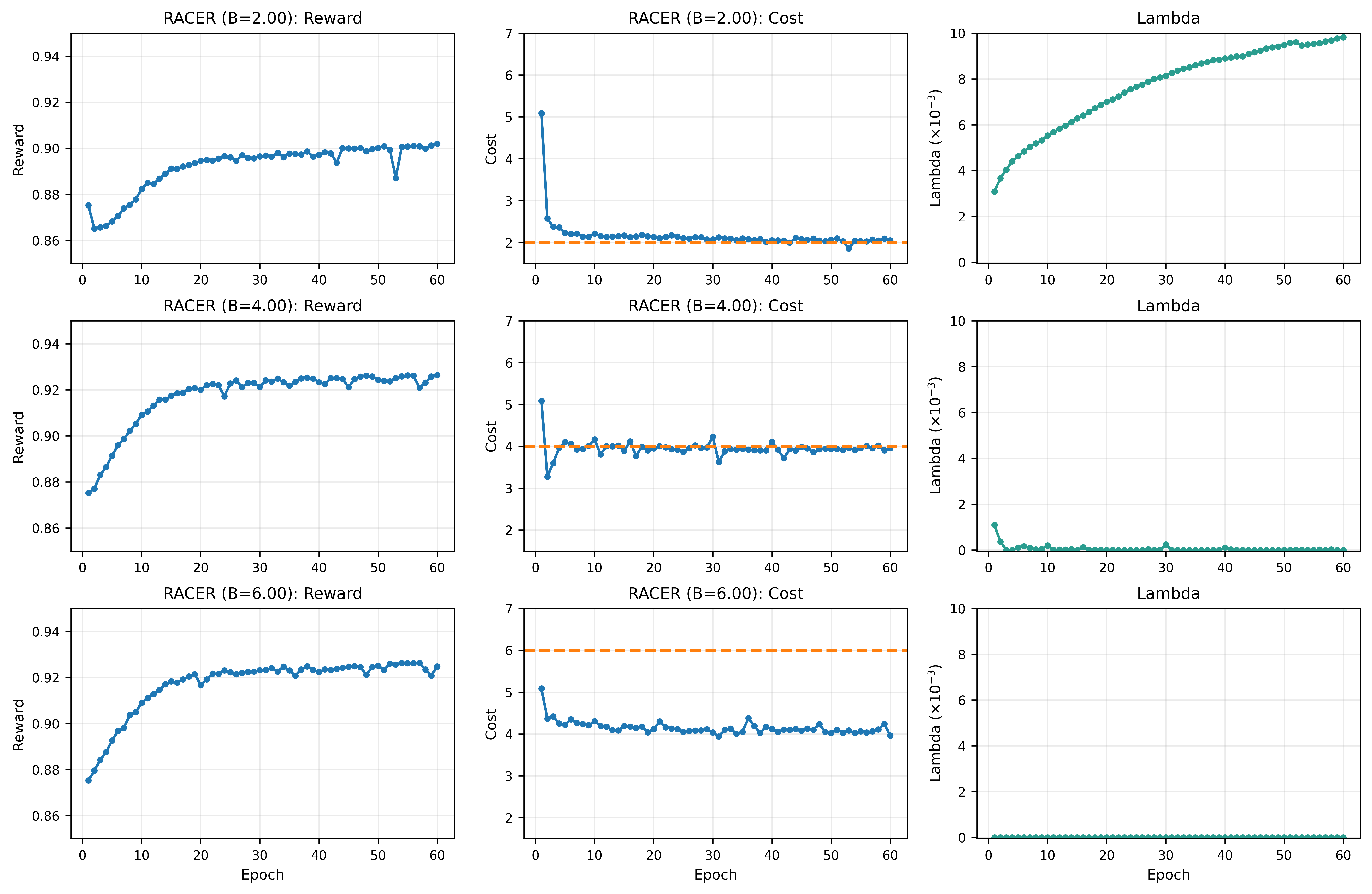}
    \caption{Training dynamics of RACER under different cost budgets. Training reward (left), realized cost (middle), and the learned dual variable $\lambda$ (right) across epochs for budgets $B\in\{2,4,6\}$. The dashed line denotes the target cost budget. Reward increases monotonically while $\lambda$ adapts to the tightness of the constraint, driving the training cost toward the specified budget.}
    \label{fig:training_curve}
\end{figure}

\subsection{Case Studies}
In Table~\ref{tab:emp_cases_merged}, we report some representative cases to illustrate when and why reasoning helps.

\newcommand{\judgegood}[1]{\textbf{\textsc{Correct}}~\textcolor{blue}{(\textbf{#1})}}
\newcommand{\judgebad}[1]{\textbf{\textsc{Incorrect}}~\textcolor{red}{(\textbf{#1})}}

\begin{table}[h]
\scriptsize
\centering
\setlength{\tabcolsep}{4pt}
\renewcommand{\arraystretch}{1.15}
\caption{Empirical case studies across benchmarks (excerpts only). Each row contrasts non-reasoning vs.\ reasoning judge outputs on the same instance; the label indicates whether the judge decision matches the ground-truth winner.}
\label{tab:emp_cases_merged}
\begin{tabularx}{\linewidth}{
>{\raggedright\arraybackslash}p{0.22\linewidth}
>{\raggedright\arraybackslash}p{0.39\linewidth}
>{\raggedright\arraybackslash}p{0.39\linewidth}
}
\toprule
\textbf{Dataset (ID)} & \textbf{Non-Reasoning Judge} & \textbf{Reasoning Judge} \\
\midrule

\textsc{JudgeBench-Math} (515) &
\judgebad{\ding{55}}
\par\smallskip
\textit{(excerpt)} Both Assistant A and Assistant B provide step-by-step solutions.
The judge prefers Assistant B due to its
\texthlred{more detailed explanations and justifications},
which are considered more convincing and thorough,
despite an error in counting the contribution of $49 = 7^2$.
\ldots
\par\smallskip
Final verdict:
\[
\texttt{[[B]]}
\]
&
\judgegood{\ding{51}}
\par\smallskip
\textit{(excerpt)} The judge explicitly verifies the computation using the Legendre formula:
\[
\mathhlblue{\left\lfloor \tfrac{50}{7} \right\rfloor
+ \left\lfloor \tfrac{50}{49} \right\rfloor
= 7 + 1 = 8}
\]
It identifies that Assistant B undercounts the contribution of $49$,
and correctly determines that Assistant A provides the correct answer.
\ldots
\par\smallskip
Final verdict:
\[
\texttt{[[A]]}
\]
\\
\midrule

\textsc{JudgeBench-Knowledge} (38) &
\judgegood{\ding{51}}
\par\smallskip
\textit{(excerpt)} Both Assistant A and Assistant B provided thorough and accurate responses to the user's question. \ldots

Assistant B's response is more detailed and provides a step-by-step analysis of each option. They correctly identify the correct answers as (D), (G), and (I) but choose to focus on (D) as the main correct answer\ldots

Additionally, Assistant B's response is
\texthlblue{more aligned with the user's request for a single correct answer.}

Therefore, based on the evaluation, I would say that Assistant B's response is slightly better than Assistant A's response.

\texttt{[[B]]}
&
\judgebad{\ding{55}}
\par\smallskip
\textit{(excerpt)} Okay, so I need to evaluate which AI assistant, A or B, provided a better answer \ldots

Assistant A's response \ldots They provided explanations for each, pointing out which were correct and incorrect. For example, they correctly identified that option (I) is the right answer for vibration sense with a 128 Hz tuning fork. \ldots

Assistant B's response \ldots However, their reasoning had some issues. For instance, they mentioned that the assessment should focus on the dominant arm \ldots and they didn't fully explain why 128 Hz is better than 256 Hz for vibration testing. They also made a mistake by suggesting that the correct answer is (D) when the user's main question was about vibration sense, which should point to (I) instead.

\ldots Assistant B, while correct on some points, didn't address the primary question accurately and
\texthlred{had some inaccuracies in their reasoning.}
Therefore, Assistant A's response is more comprehensive and accurate.

\texttt{[[A]]}
\\
\midrule

\textsc{RM-Bench-Chat} (chat/404/0\_0) &
\judgegood{\ding{51}}
\par\smallskip
\textit{(excerpt)}
Both Assistant A and Assistant B provide a concise and relevant answer.
Assistant A correctly states that Mars appears reddish due to \texthlblue{iron oxide} \ldots
On the other hand, Assistant B incorrectly states that Mars appears reddish due to \texthlblue{copper oxide} \ldots

\texthlblue{Considering the accuracy of the responses,} Assistant A's answer is more reliable and trustworthy. \ldots

Therefore, based on the evaluation, the final verdict is:

\texttt{[[A]]}
&
\judgegood{\ding{51}}
\par\smallskip
\textit{(excerpt)} Okay, let's see. I need to evaluate the responses \ldots

Assistant A's response \ldots It's concise and accurate.

Assistant B's response \ldots Both responses mention reddish and rust, but Assistant B says \texthlblue{copper oxide} instead of \texthlblue{iron oxide}. That's a mistake. \ldots

So, in terms of accuracy, Assistant A is correct, whereas Assistant B is wrong. \ldots
\texthlblue{However, the accuracy is key here.}
Since Assistant A provided the right information, it's better. \ldots

\texttt{[[A]]}
\\
\bottomrule
\end{tabularx}
\end{table}

\subsection{Evaluation on Additional Model Family}
\label{app:add_model_family}
To show that RACER is not specific to the Qwen3 family, we additionally train and evaluate RACER using DeepSeek-R1-Distill-Llama-8B~\citep{guo2025deepseek} as the reasoning judge, and the corresponding Llama-3.1-8B-Instruct~\citep{grattafiori2024llama} as the instruct mode judge. We follow the same data generation protocol defined in Section~\ref{subsec:exp2} and construct the training set from the mixture of \textsc{Skywork-Reward}~\cite{liu2024skywork}, \textsc{Math-step-DPO-10K}~\cite{lai2024step} and \textsc{CODE-PREFERENCE-PAIRS}~\cite{vezora2024codepreference}. As shown in the table~\ref{tab:deepseek-results}, RACER consistently outperforms random routing at matched budgets and often surpasses all-reasoning accuracy at lower cost, showing that RACER has generalization with respect to the model family.

\begin{table}[t]
\centering
\small

\caption{Routing performance and cost comparison across budgets using data generated by the DeepSeek-R1-Distill-Llama-8B and Llama-3.1-8B-Instruct pair.}
\label{tab:deepseek-results}
\begin{tabular}{lcccc}
\toprule
\textbf{Budget} & \textbf{RACER Acc (\%)} & \textbf{RACER Cost} & \textbf{Random Acc (\%)} & \textbf{Random Cost} \\
\midrule
All-Instruct & 67.32 & 1.00 & -- & -- \\
2.0          & 79.37 & 2.05 & 73.01 & 2.44 \\
2.5          & 80.41 & 2.45 & 74.99 & 2.88 \\
3.0          & 80.87 & 2.89 & 76.06 & 3.15 \\
3.5          & 80.83 & 3.20 & 76.98 & 3.39 \\
All-Reasoning & 79.93 & 4.12 & -- & -- \\
\bottomrule
\end{tabular}

\end{table}

\subsection{Analysis of Self-Routing Behavior}

Our approach learns the routing policy from data. As a comparison, we consider a simple self-routing baseline, where the LLM itself decides whether reasoning is needed. The training and evaluation data is the same as we used in Section~\ref{subsec:exp2}, generated by Qwen3-4B pair. The self-routing baseline is implemented as a two-step procedure: we first prompt (see table~\ref{tab:prompts}) the instruct mode judge to decide whether to enable reasoning, and then evaluate the response generated under the chosen mode, measuring the corresponding accuracy and cost.

As shown in Table~\ref{tab:self_routing}, this strategy routes nearly all inputs to the reasoning mode, resulting in behavior and cost that are effectively equivalent to the all-reasoning judge. This suggests that self-routing fails to learn meaningful selectivity. In contrast, RACER learns to activate reasoning only when beneficial under a budget, achieving a better accuracy–cost trade-off.

\begin{table}[t]
\centering
\caption{User prompt template used for self-routing (instruct mode).}
\label{tab:prompts}
\footnotesize
\renewcommand{\arraystretch}{1.2}
\setlength{\tabcolsep}{6pt}

\begin{tabularx}{\linewidth}{p{0.18\linewidth} X}
\toprule
\textbf{Template} & \textbf{Content} \\
\midrule

\textbf{User Prompt} &
\ttfamily
You are about to judge which of two AI responses better answers a user question.\par
[User Question]\{question\}\par
[The Start of Response A]\{answer\_a\}[The End of Response A]\par
[The Start of Response B]\{answer\_b\}[The End of Response B]\par
Do you need to enable your reasoning mode (i.e., think step-by-step carefully) to judge which response is better for the above question and responses? Answer with a single word: yes or no.
\\

\bottomrule
\end{tabularx}
\end{table}

\begin{table}[t]
\centering
\small
\caption{Self-routing behavior across model scales in Qwen3 family.}
\label{tab:self_routing}

\begin{tabular}{l c ccc c}
\toprule
\textbf{Model} 
& \textbf{Reasoning Fraction} 
& \multicolumn{3}{c}{\textbf{Accuracy (\%)}} 
& \textbf{Cost Ratio} \\
\cmidrule(lr){3-5}
& 
& \textbf{All-instruct} 
& \textbf{All-reasoning} 
& \textbf{Self-routing} 
& \\
\midrule
1.7B & 99.91 & 70.04 & 78.35 & 78.34 & 19.07 \\
4B   & 99.51 & 81.96 & 86.50 & 86.50 & 6.25  \\
8B   & 99.88 & 82.20 & 87.86 & 87.86 & 6.36  \\
\bottomrule
\end{tabular}
\end{table}
\newpage

\end{document}